\documentclass[10pt]{article}

\usepackage[utf8]{inputenc} 
\usepackage[T1]{fontenc}    
\usepackage{hyperref}       
\usepackage{url}            
\usepackage{booktabs}       
\usepackage{amsfonts}       
\usepackage{nicefrac}       
\usepackage{microtype}      
\usepackage{amsmath, amsthm, amssymb, multirow, paralist}
\usepackage[numbers]{natbib}
\usepackage[noend]{algpseudocode}
\usepackage{dsfont}
\usepackage{wrapfig}
\usepackage{graphicx}
\graphicspath{{figure/}}
\usepackage{subfigure}
\usepackage{comment}
\usepackage{multirow}
\usepackage{makecell}
\usepackage{bm}
\usepackage{setspace}
\usepackage{stmaryrd}
\usepackage{enumerate}
\usepackage{enumitem}
\newtheorem{theorem}{Theorem}

\newtheorem{definition}{Definition}

\newtheorem{corollary}[theorem]{Corollary}
\newcommand{\argmin}{\mathop{\arg\min}}
\newcommand{\argmax}{\mathop{\arg\max}}
\usepackage[noend]{algpseudocode}
\usepackage{algorithmicx,algorithm}

\usepackage{float}

\def \X {\mathcal{X}}
\def \Y {\mathcal{Y}}

\def \R {\mathcal{R}}
\def \S {\mathcal{S}}
\def \F {\mathcal{F}}
\def \G {\mathcal{G}}

\def \RR {\mathbb{R}}
\def \EE {\mathbb{E}}
\def \II {\mathbb{I}}
\def \Ra {\mathfrak{R}}

\usepackage[body={6.5in,8in}]{geometry}

%
\date{}

\author{
	Jiaqi~Lv$^1$, Biao~Liu$^2$, Lei~Feng$^3$, Ning~Xu$^2$, Miao~Xu$^4$, Bo~An$^5$, Gang~Niu$^1$,\\ Xin~Geng$^2$, Masashi~Sugiyama$^{1,6}$\\
	$^1$RIKEN Center for Advanced Intelligence Project\\
	$^2$School of Computer Science and Engineering, Southeast University\\
	$^3$College of Computer Science, Chongqing University\\
	$^4$School of Information Technology and Electrical Engineering, The University of Queensland\\
	$^5$School of Computer Science and Engineering, Nanyang Technological University\\
	$^6$Graduate School of Frontier Sciences, The University of Tokyo\\
	\texttt{\{is.jiaqi.lv, gang.niu.ml\}@gmail.com}, \\
	\texttt{\{liubiao01, xning, xgeng\}@seu.edu.cn}, \\
	\texttt{lfeng@cqu.edu.cn}, \texttt{miao.xu@uq.edu.au}\\
	\texttt{boan@ntu.edu.sg}, \texttt{sugi@k.u-tokyo.ac.jp}
}

\title{On the Robustness of Average Losses for Partial-Label Learning}

\begin{document}

\maketitle

\begin{abstract}
\emph{Partial-label learning}~(PLL), as a typical weakly supervised learning problem, trains multi-class classifiers from instances with partial labels---a partial label for an instance is a set of candidate labels where a \emph{fixed but unknown} candidate is the true label.
There are two mainstream approaches to PLL: (a) the \emph{identification-based strategy}~(IBS) purifies each partial label on the fly to select the (most likely) true label for training; (b) the \emph{average-based strategy}~(ABS) treats all candidate labels equally for training and let trained models be able to predict the true label of any instance.
The research of PLL has focused on IBS due to its better performance.
However, we argue that ABS is also worthy of study, since it follows \emph{empirical risk minimization} and thus it is easier to analyze; more importantly, \emph{all modern IBS methods behave like ABS in the beginning of training} to prepare for partial-label purification and true-label selection.
In this paper, we analyze why the performance of ABS was unsatisfactory and propose how to improve it theoretically and practically.
Specifically, we first formalize five problem settings for the generation processes of noise-free and noisy partial labels, and then prove that \emph{average partial-label} (APL) losses with \emph{bounded} multi-class losses are \emph{always} robust under mild assumptions, while APL losses with \emph{unbounded} multi-class losses (e.g., the cross-entropy loss) may not be robust.
Given that there exists no such analysis for IBS yet, our robustness analysis is novel for not only ABS but also PLL.
We have two promising experimental findings: (a) ABS methods using bounded losses can match or even exceed the state-of-the-art performance of IBS methods using unbounded losses; (b) after using robust APL losses to warm start, IBS methods can further improve upon themselves.
Considering the theoretical superiority and practical potential of ABS, our work draws attention to the design of more advanced ABS methods, which can in turn boost IBS and push forward PLL as a whole.
\end{abstract}

\section{Introduction}

Deep neural networks (DNNs) have become the par excellence base model in diverse application domains, which transform the input data (e.g., images) to the specific outputs (e.g., classes). 
Much of the success in running DNNs is attributed to its internal capability to approximate arbitrarily complex functions mapping input to output \cite{cybenko1989approximation,anthony2009neural,du2019gradient,allen2019convergence}, as well as an external driving force---labeled training data.
It is widely believed that the performance of DNNs is improved as the number of data increases, reaching saturation only when millions of data are available \cite{zhou2017places,sun2017revisiting,mahajan2018exploring,radford2019language}.
Their remarkable performance usually comes at a prohibitively high labeling cost, especially when data labeling must be carried out professionally.
A shortage of skilled experts, an expensive and time-consuming labeling process, and privacy issues can pose challenges to the acquisition of high-quality labels.
As a result, learning with imperfect but inexpensive labels is practically significant. 

\begin{figure*}[!t]
	\begin{minipage}{.04\textwidth}
		\centering
		\begin{tabular}{l}
			MNIST
		\end{tabular}
	\end{minipage}
	\hfill
	\begin{minipage}{.26\textwidth}
		\centering
		\includegraphics[width=4cm]{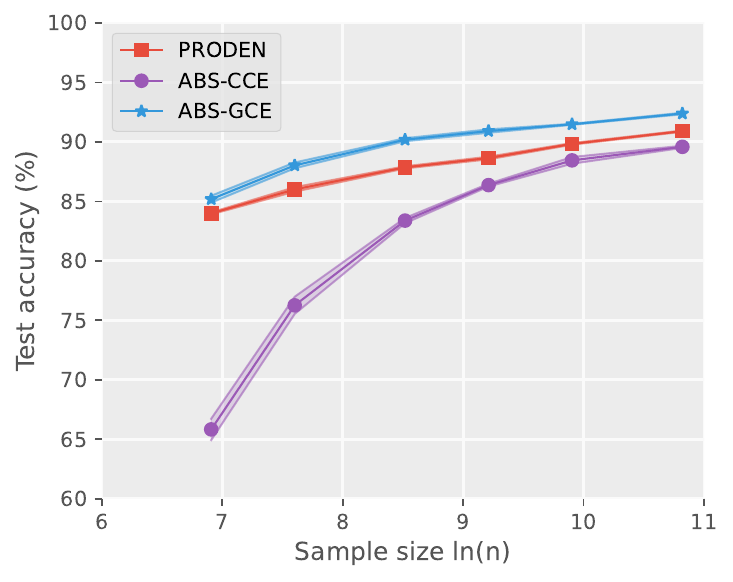}
		\centerline{\quad Linear with NF-PLs}
	\end{minipage}\hspace{-8mm} 
	\begin{minipage}{.26\textwidth}
		\centering
		\includegraphics[width=4cm]{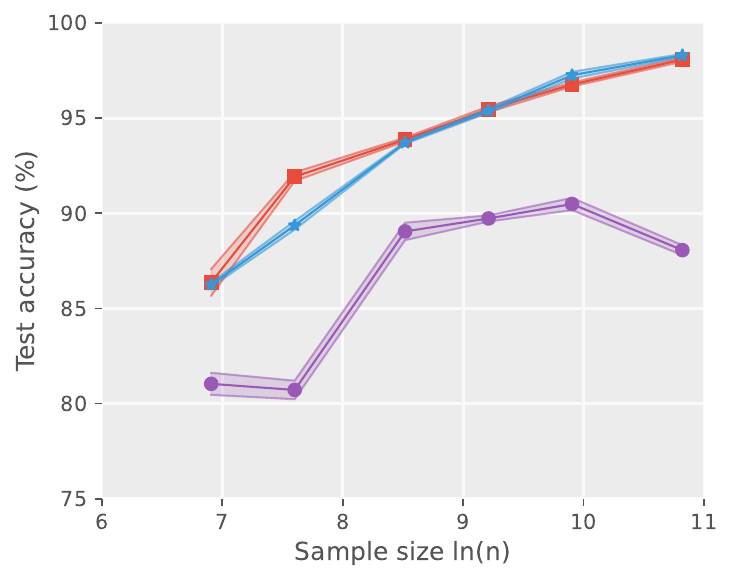}
		\centerline{\quad MLP with NF-PLs}
	\end{minipage}\hspace{-8mm}
	\begin{minipage}{.26\textwidth}
		\centering
		\includegraphics[width=4cm]{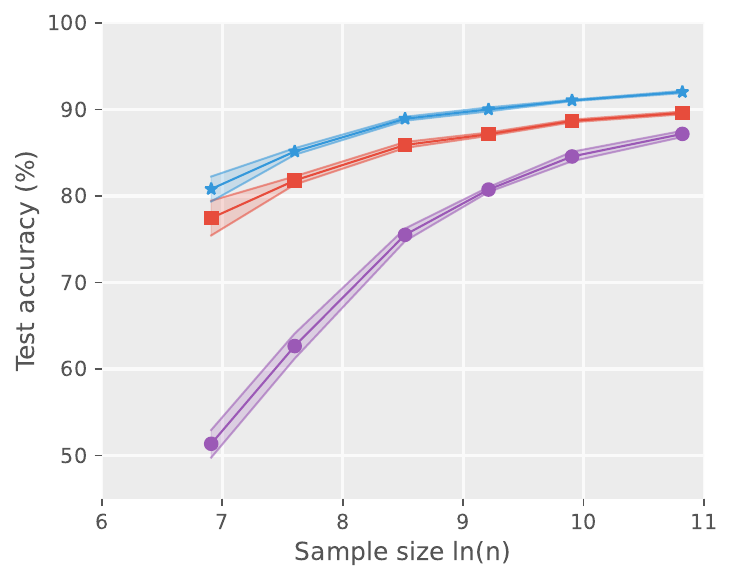}
		\centerline{\quad Linear with N-PLs}
	\end{minipage}\hspace{-8mm}
	\begin{minipage}{.26\textwidth}
		\centering
		\includegraphics[width=4cm]{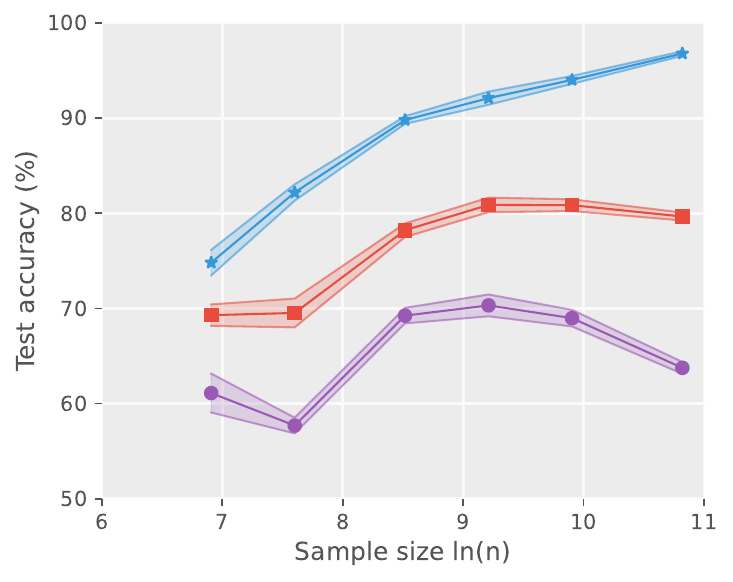}
		\centerline{\quad MLP with N-PLs}
	\end{minipage}
	
	\begin{minipage}{.04\textwidth}
		\centering
		\begin{tabular}{l}
			CIFAR-\\10
		\end{tabular}
	\end{minipage}
	\hfill
	\begin{minipage}{.26\textwidth}
		\centering
		\includegraphics[width=4cm]{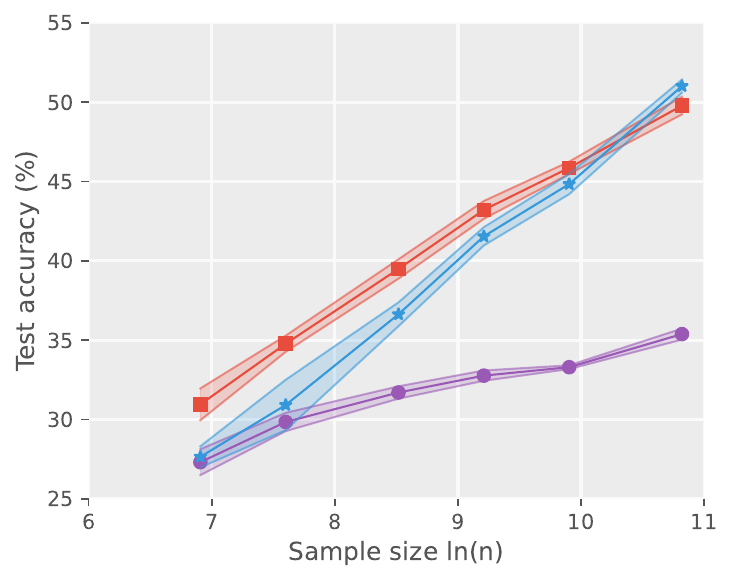}
		\centerline{\quad MLP with NF-PLs}
	\end{minipage}\hspace{-9mm} 
	\begin{minipage}{.26\textwidth}
		\centering
		\includegraphics[width=4cm]{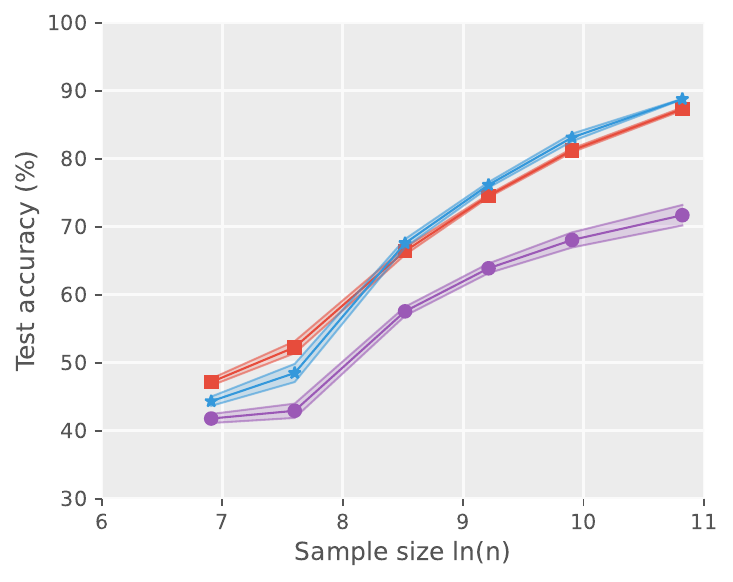}
		\centerline{\quad ConvNet with NF-PLs}
	\end{minipage}\hspace{-9mm} 
	\begin{minipage}{.26\textwidth}
		\centering
		\includegraphics[width=4cm]{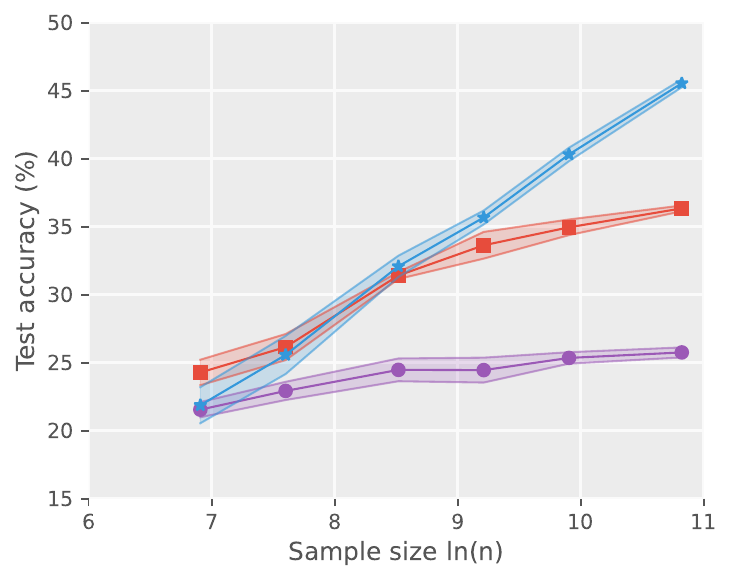}
		\centerline{\quad MLP with N-PLs}
	\end{minipage}\hspace{-9mm} 
	\begin{minipage}{.26\textwidth}
		\centering
		\includegraphics[width=4cm]{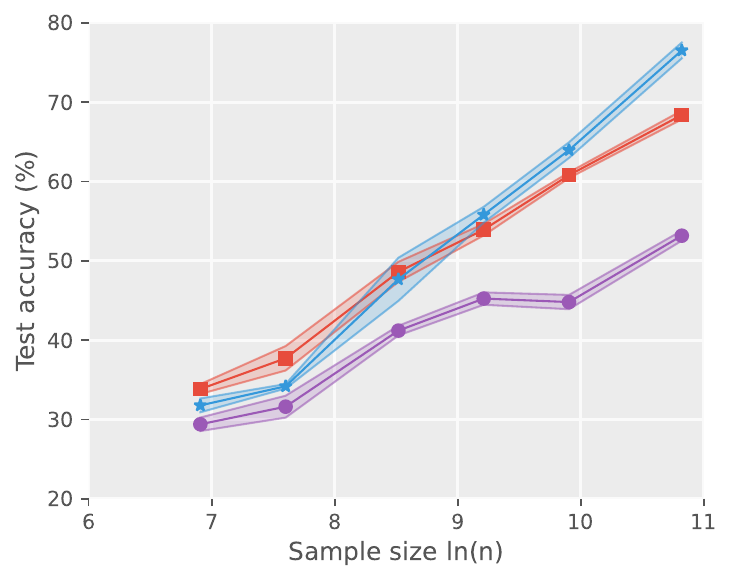}
		\centerline{\quad ConvNet with N-PLs}
	\end{minipage}
	\caption{Comparison of a SOTA IBS method PRODEN \cite{lv2020progressive} and our ABS method with CCE or GCE for PLL under different training dataset sizes. 	
		All experiments were repeated 5 times and the standard deviations are presented in shadow.
		In the upper row, on MNIST, a linear-in-input model (Linear) and a 5-layer multi-layer perceptron (MLP) were trained by stochastic gradient descent \cite{robbins1951stochastic}. 
		In the bottom row, on CIFAR-10, an MLP-5 and a 12-layer convolutional neural network (ConvNet) \cite{laine17temporal} were trained by Adam \cite{diederik2015adam}. 
		We can clearly see our ABS method with GCE matches the performance of PRODEN with noise-free (NF-) PLs, and exceeds PRODEN with noisy (N-) PLs, where the true label might not be included in PLs.}
	\label{fig:ibsVSabs}
\end{figure*}

Crowdsourcing \cite{brabham2008crowdsourcing} relying on non-expert workers has recently emerged as an attractive surrogate.
Unlabeled instances are typically assigned to workers of varying knowledge, and limited by their expertise, they often have difficulty recognizing the exact label from multiple ambiguous categories.
Therefore, crowdsourcing platforms naturally allow workers to select several possible labels if they are uncertain about an instance.
In this way, an instance is associated with a set of candidate labels where a \emph{fixed but unknown} candidate is the true label. 
A set of candidate labels is referred to as a \emph{partial label} (PL) for an instance, and the learning paradigm that can handle PLs is termed as \emph{partial-label learning} (PLL) \cite{nguyen2008classification,cour2011learning,zhang2016par,zhang2017disambiguation,wu2018towards,xu2019partial,cabannes202structured,yao2020network,lv2020progressive,feng2020provably,lyu2021gmpll,wen2021leveraged}, also known as ambiguous label learning \cite{chen13dictionary,chen2017learning,gong2018regularization,yao2020deep} and superset learning \cite{hullermeier2015superset,liu2014learnability,liu2012conditional}.
PLL attempts to infer the optimal multi-class classifier that is able to accurately predict the true label for unseen instances by fitting PLs, and more ideally, the hypotheses can be modeled by DNNs. 
PLL problems arise in real-world scenarios \cite{luo2010,liu2012conditional,zeng13learning} as well.

Research on PLL dates back about 20 years. 
Initially, Jin and Ghahramani \cite{jin2003learning} built up a maximum likelihood model to reassign class-posterior probabilities to candidate labels iteratively.
This work opened up a main research route of PLL that purifies each PL on the fly to select the most likely true label during training \cite{jin2003learning,chen13dictionary,feng2019partialb,feng2019partial,xu2019partial,wen2021leveraged}, which is named the \emph{identification-based strategy} (IBS).
Because IBS aims at eliminating the ambiguity \cite{zhang2017disambiguation} between individual instances and their true labels in the training phase, this technique is also commonly known as \emph{disambiguating} \cite{jin2003learning,cour2011learning}.
Contrariwise, H{\"{u}}llermeier and Beringer \cite{hullermeier2006learning} formalized PLL as a collaborative problem, where all candidate labels contribute to the learning objective \emph{equally}.
The idea is that the inductive bias underlying the learning process can benefit disambiguating the given PLs, and let trained models be able to predict the true label of any instance.
Such a scheme is called the \emph{average-based strategy} (ABS) \cite{cour2011learning,zhang2015solving}.

In recent years, the research of PLL has focused on IBS, since it was believed that the performance of IBS is more promising, while little attention has been paid to ABS.
This is especially so in the era of deep learning \cite{lv2020progressive,feng2020provably,wen2021leveraged}.
The pessimism about ABS comes from ``memorization'' of over-parameterized DNNs, that is, the perfectly fitted DNNs can memorize all training samples, even if their labels are completely arbitrary \cite{zhang2017understanding,feldman2020neural}.
ABS is free from identifying the latent true labels during training, and therefore memorize all candidate labels.
Then the PLL problem would be degenerated to a multiple-label problem \cite{jin2003learning}, where it is acceptable that an arbitrary candidate label is taken for the ``pseudo true label''.
Then will ABS fail in the true-label prediction and should ABS just be phased out by the times?

In this paper, we argue the research value of ABS by showing its practical potential and theoretical superiority, problematizing the traditional view of ABS and pushing forward PLL as a whole.

\vspace{1em}  
\noindent
\textbf{Promising experimental findings} 
\quad Our work is inspired by a set of exploratory experiments.
We propose a family of ABS losses named \emph{average partial-label} (APL) losses, which are defined as the average of multi-class losses over all candidate labels.
The categorical cross entropy (CCE) loss is the most popular multi-class loss in deep learning nowadays, and we observed that all existing deep IBS methods \cite{lv2020progressive,feng2020provably,wen2021leveraged,yao2020deep} also adopted the CCE loss.
Thus firstly, we trained several standard deep models with the APL loss equipped with the CCE loss on benchmark datasets where true labels were manually corrupted to PLs. 
In this case, we found that our ABS method with the CCE loss performs poorly as was previously thought, shown in Figure~\ref{fig:ibsVSabs}.
Extending these experiments, we then replaced the CCE loss with another widely-used loss, i.e., the generalized cross entropy (GCE) loss \cite{zhang2018generalized}, and conducted experiments with early stopping \cite{morgan89generalization}.
Surprisingly, we observed that our ABS method with the GCE loss can often be on a par with, or even outperform a SOTA IBS method PRODEN \cite{lv2020progressive}, regardless of datasets, models, and optimizers.
For the details of data generation processes, please see Section~\ref{sec:empirical}.
These observations challenge common beliefs since one would expect that ABS is incapable of distinguishing true labels, thereby hurting generalization, while the results showed that ABS method with the APL loss can also predict the true label.
Therefore, we would like to analyze \emph{what is it that distinguishes ABS methods that perform well from those that do not}, and the answer to this question will hopefully help improve ABS.

\vspace{1em}  
\noindent
\textbf{Novel robustness analysis} 
\quad To answer this question, we analyze the \emph{robustness} of our ABS method to PLs, namely, whether the classification error on supervised data of the minimizer of the risk w.r.t. the APL losses is approximated to that of the Bayes classifier (learned using supervised data) \cite{manwani2013noise,ghosh2015making,ghosh2017robust,menon2019can}.
Thanks to the concise form of the APL losses, it is easy to estimate the risk under the APL losses from PLs and carry out \emph{empirical risk minimization} (ERM).
Thus we can analyze the robustness through existing mathematical techniques.

Furthermore, we take one more step forward---\emph{unreliable PLL}, which learns from \emph{noisy PLs}, that is, the candidate-label set that \emph{might not} include the true label.
As the acquisition of training data expands, noise is inevitable, therefore, it is not an overstatement to say unreliable PLL is imminent in real-world applications. 
Unfortunately, previous PLL algorithms concentrate on noise-free PLs, and they have not been able to handle the unreliable PLL well, as shown in the two rightmost columns in Figure~\ref{fig:ibsVSabs}.
To avoid confusion, we refer to the traditional PLL paradigm \emph{reliable PLL}, to reliable PLL and unreliable PLL collectively as ``PLL'', and to noise-free PL and noisy PL collectively as ``PL'' in later sections.

To theoretical analyze the cause of success or failure of an ABS method, we formalize five problem settings for the generation processes, two of which are for noise-free PLs and three are for noisy PLs.
With the help of them, we delve into multiple widely-used multi-class loss functions, and formally prove that APL losses with \emph{bounded} loss functions (e.g., GCE) are always robust under mild assumptions on the domination of true labels, while APL losses with \emph{unbounded} loss functions (e.g., CCE) may not be robust.
The theoretical results are reconciled with experimental observations in Figure~\ref{fig:ibsVSabs}.
Given that there exists no such analysis for IBS yet, our robustness analysis is novel for not only ABS but also PLL.

\vspace{1em}  
\noindent
\textbf{ABS improvement to IBS} 
\quad Moreover, we rethink the existing deep IBS methods.
We point out that \emph{all modern IBS methods behave like ABS in the beginning of training} to prepare for PL purification and true-label selection.
In other words, they need to use ABS to warm start model training, and use the pretrained model to identify the true label.
Thus they will select the correct true labels if ABS can become better, while they have hitherto used unbounded losses throughout the training process.
As a consequence, IBS methods can in turn improve upon themselves by our study on ABS: we suggest utilizing bounded losses as a warmup for the first few epochs.
We conduct extensive experiments to verify the effectiveness of this improvement.

\vspace{1em}  
\noindent
\textbf{Contributions} 
\quad Our contributions can be summarized:
\begin{itemize}[topsep=0ex,itemsep=0ex,leftmargin=*]
	\item We establish a theoretically grounded framework for ABS based on a simple yet effective APL loss family, the risk minimization of which is guaranteed to be robust to PLs under five problem settings for the data generation processes.
	\item To the best of our knowledge, we are the first to propose the unreliable PLL paradigm, further developing the practical potentiality of PLL in society. 
	Our ABS method with the APL losses provides an effective baseline for unreliable PLL, and it also works well for reliable PLL without any modifications.
	\item We redraw the attention of the PLL community to ABS. 
	Our research findings can not only improve ABS, but also enlighten a general principle to incorporate ABS into IBS methods to further enhance their performance, and push forward PLL as a whole.
\end{itemize}

\vspace{1em}  
\noindent
\textbf{Paper organization} 
\quad We recapitulate the related work and discuss the philosophies behind ABS and IBS in Section~\ref{sec:related}.
In Section~\ref{sec:pre}, we give an overview of the problem setting and introduce the importance of robustness analysis.
In Section~\ref{sec:method}, we propose APL losses and formalize five generation processes of PLs.
We present our main theoretical results and experimental findings in Section~\ref{sec:result} and Section~\ref{sec:exp}, respectively.
We conclude in Section~\ref{sec:conclu}, and defer additional experimental results and all the proofs to the appendix.

\section{Related Work}\label{sec:related}

In this section, we review some seminal work in reliable PLL, discuss philosophies behind different technical routes of PLL, and analyze its relation to and difference from other machine learning problems.

\vspace{1em}
\noindent
\textbf{Practical reliable PLL} 
\quad \cite{jin2003learning} is one of the milestones for reliable PLL.
It proposed to disambiguate noise-free PLs by using the expectation-maximization (EM) algorithm. 
In the E-step, the class-posterior probability is estimated as the normalization of current model predictions.
In the M-step, the model parameters are updated in order to minimize the KL divergence between the given estimated
probabilities and the model-based distributions.
Such a strategy that identifies true labels along with model training is referred to as IBS.
Following the milestone work, many IBS algorithms have been developed (e.g., \cite{chen13dictionary,zhang2015solving,zhang2016par,gong2018regularization,xu2019partial,lyu2021gmpll,feng2019partial,tang2017confidence}).

Reliable PLL also has been studied along the other research route called ABS, pioneered by H{\"{u}}llermeier and Beringer \cite{hullermeier2006learning}.
They determined the class label for an unseen instance by voting among the candidate labels of its nearest neighbors.
Cour \textit{et al.} \cite{cour2011learning} proposed a convex loss that distinguishes the averaged output over the candidate labels from outputs over non-candidate labels.
It is always believed that ABS is likely to fail as the outputs of pseudo true labels would overwhelm the output of true label.
Therefore, the development of the two strategies was not balanced -- IBS has been the focus of considerable recent research whilst ABS faded away.

From 2020, deep learning starts injecting new vitality into IBS.
Almost at the same time, three works proposed to model classifiers by DNNs.  
Yao \textit{et al.} \cite{yao2020deep} adopted ResNet as the backbone together with two specially designed regularizers for partially-labeled image classification.
Yao \textit{et al.} \cite{yao2020network} used the co-teaching scheme \cite{han2018co} to let two networks interact with each other regarding the confidence levels of the instances. 
The method proposed by Lv \textit{et al.} \cite{lv2020progressive} progressively identifies true labels based on the memorization effect of DNNs \cite{arpit2017closer}, which is flexible on the learning models and loss functions.
Later, Feng \textit{et al.} \cite{feng2020provably} formalized for the first time the generation process of noise-free PLs, based on which they derived two provably consistent algorithms. 
Wen \textit{et al.} \cite{wen2021leveraged} proposed a leveraged weighted loss to trade off the losses on candidate labels and non-candidate ones.
Wu and Sugiyama \cite{wu2021learning} proposed a unified framework includes \cite{feng2020provably} as a special case. 
These three works are both compatible with DNNs. 

\vspace{1em}                         
\noindent
\textbf{Provable reliable PLL} 
\quad 
Although the above practical algorithms have proven empirically successful on specific domains, there is an elusive theoretical gap in the understanding of them. 
Through the lens of learning theory, some researchers proposed seminal theoretical works in reliable PLL.
Liu and Dietterich \cite{liu2014learnability} proposed the \emph{small ambiguity degree condition} to ensure that classification errors on any instance have a probability of being detected.
The proof of this theorem requires strict assumptions: the approximation error equals zero and meanwhile the Bayes error equals zero (i.e., the deterministic scenario \cite{mohri2018foundations}).
Cour \textit{et al.} \cite{cour2011learning}, Feng \textit{\textit{et al.}} \cite{feng2020provably}, and Wen \textit{\textit{et al.}} \cite{wen2021leveraged} focused on the statistical consistency.
They proposed a consistent loss based on some specific data generation process or deterministic scenario assumption, while our findings are general enough to hold under different generation processes and also a stochastic scenario \cite{mohri2018foundations}. 

\vspace{1em}               
\noindent
\textbf{Philosophies behind IBS and ABS} 
\quad IBS iterates between the optimization of a learning model and the identification of the true label.
Typically the identified true label has the biggest posterior of all labels, and must be in the candidate-label set.
In other words, the ``true'' one is the ``ideal'' one.
It implies that the true label can be \emph{uniquely determined} given an input---it is satisfied only in the \emph{deterministic scenario} where the class-posterior probability of the true label is equal to 1.

However, the natural world is more like the \emph{stochastic scenario} that possesses some inherent randomness. 
In this setting, the label is a probabilistic function of the input, indicating that the same input will lead to an ensemble of \emph{unfixed} output labels.
ABS essentially gets rid of the deterministic scenario by avoiding recognizing the ``ideal'' label.
The ``true'' label is considered as an ``actually sampled'' outcome, and consequently, the philosophy of ABS is compatible with the stochastic scenario.
Therefore, it is crucial to design advanced methods and provide theoretical understandings for ABS.

\vspace{1em}                         
\noindent
\textbf{Relevant learning problems} 
\quad There are some weakly supervised learning problems related to PLL.

\begin{itemize}[topsep=0ex,itemsep=0ex,leftmargin=*]
	\item \emph{Complementary-label (CL) learning} \cite{ishida2017learning,yu2018learning,ishida2019complementary,gao2021discriminative} learns from weakly-supervised datasets wherein an instance is equipped with a CL. 
	A CL specifies a class that the pattern does NOT belong to, so it can be considered as an extreme noise-free PL case with a fixed number ($k-1$) of candidate labels.
	Then from the algorithmic point, reliable PLL algorithms can directly handle the CL learning problem, but not also the other way around.
	
	\item \emph{Semi-supervised learning (SSL)} \cite{zhu2009introduction,tarvainen2017mean,zhang2018mixup,miyato2018virtual,berthelot2019mixmatch} learns from datasets consisting of both labeled and unlabeled data.
	Since we can regard the universe set of labels as the candidate labels of unlabeled data, SSL has some relation with PLL.
	However, standard SSL assumes that labeled data are fully supervised, which is different from reliable PLL, where labeled data are still ambiguous.
	
	\item \emph{Noisy-label learning (NLL)} \cite{natarajan2013learning,liu2015classification,patrini2017making,han2018co,han2020survey} learns from noisy supervision where the training data are sampled from a corrupted distribution.
	Both NLL and PLL should have an underlying transition matrix linking the clean class posterior and the observed class posterior of an instance.
	Nonetheless, their matrix dimensions are different: the transition matrix is $k\times k$ in NLL and $k\times (2^k-2)$ in PLL.
	
\end{itemize}

\section{Preliminaries}\label{sec:pre}

In this section, we formally introduce reliable PLL and propose unreliable PLL, and give the definition of robustness.

\subsection{Problem Setup}

\textbf{Basic settings} 
\quad Let us consider a multi-class classification problem of $k$ classes. 
Let $\X\subseteq\RR^d$ be the feature space, $\Y=[k]\doteq\{1,2,\ldots,k\}$ be the label space, and $\S\doteq\{2^{[k]}\backslash\emptyset\backslash\Y\}$ be the \emph{PL space}. 
$2^{[k]}$ means the collection of all subsets in $[k]$, and $|\S|=2^k-2$ because the empty set and the whole label set are excluded.
We denote by $p(\boldsymbol{x},y)$ some probability density of ``clean'' distribution over $\X\times\Y$.
In fully-supervised classification, the goal is a learning model (e.g., a DNN) $f:\RR^d\rightarrow [k]$ that can make correct prediction on unseen inputs, with a set of i.i.d.~supervised training data $\{(\boldsymbol{x}_i,y_i)\}^n_{i=1}$ sampled from $p(\boldsymbol{x},y)$.
A classifier $f(\boldsymbol{x})$ is routinely assumed to take the following form:
\begin{equation*}
	f(\boldsymbol{x})=\argmax_{i\in \Y}g_i(\boldsymbol{x}),
\end{equation*}
where $g_i(\boldsymbol{x}):\RR^d\rightarrow\RR$ outputs a score for class $i$.
In this paper, we concentrate on deep learning: assume the learning model $f$ is a DNN and apply softmax operation to convert scores into a vector of class-posterior probabilities, i.e., $g_i(\boldsymbol{x})=p(i|\boldsymbol{x})\in\Delta^{k-1}$ \cite{raghu2017expressive}, where $\Delta^{k-1}$ denotes the $k$-dimensional simplex. 

While in PLL, for the notional clean distribution with probability density $p(\boldsymbol{x},y)$, we instead observe i.i.d.~PL training data $\{(\boldsymbol{x}_i,s_i)\}^n_{i=1}$ from a corrupted version $p(\boldsymbol{x},s)$ of $p(\boldsymbol{x},y)$ over $\X\times\S$.
The distribution $p(\boldsymbol{x},s)$ is such that the marginal distribution of instances $p(\boldsymbol{x})$ is unchanged, but the observed label is corrupted to an ambiguous candidate-label set.
PLL tries to nonetheless learn the optimal classifier by fitting $\{(\boldsymbol{x}_i,s_i)\}_{i=1}^n$.

The key assumption in reliable PLL is that the PLs are noise-free, which means the latent true label $y_i$ of an instance $\boldsymbol{x}_i$ is always included in its candidate-label set $s_i$, i.e., 
\begin{equation*}
	p(y_i\in s_i\mid\boldsymbol{x}_i,s_i)=1,\ \forall (\boldsymbol{x}_i,y_i)\in p(\boldsymbol{x},y),\ \forall s_i\in\S.
\end{equation*}

We argue that this assumption is fairly strict since the density $p(\boldsymbol{x},y)$ of clean distribution is agnostic.
Requiring crowdsourcing workers to cautiously judge each category to ensure that the correct one must be chosen partially runs counter to the original purpose of reducing labeling costs. 
As the acquisition of training data expands, it is pervasive for label information to be corrupted, but unfortunately, it has never been considered in previous PLL works.
Thus we introduce a more general data setting titled \emph{unreliable PLL}: 
\begin{definition}[unreliable PLL]\label{def:upl}
	Given the joint density $p(\boldsymbol{x},y,s)$ and its marginal density $p(\boldsymbol{x},s)$, for any noisy PL data $(\boldsymbol{x}_i,s_i)$ independently sampled from $p(\boldsymbol{x},s)$, its true label $y_i$ has a probability of $0\le\gamma\le 1$ not being included in the candidate-label set $s_i$, i.e.,
	\begin{equation*}
		p(y_i\in s_i\mid\boldsymbol{x}_i,s_i)=1-\gamma,\ \forall (\boldsymbol{x}_i,y_i)\sim p(\boldsymbol{x},y),\ \forall s_i\in\S,
	\end{equation*}
	where $\gamma$ is called the \emph{unreliability rate}. Learning from noisy PL data is called unreliable PLL.
\end{definition}

\subsection{Robustness}\label{subsec:robust}

\quad The $\ell$-\emph{risk} of $f$ in fully-supervised learning w.r.t.~multi-class loss $\ell:\RR^k\times\Y\rightarrow\RR^+$ is defined as follows:
\begin{equation*}
	\R(f;\ell)=\EE_{p(\boldsymbol{x},y)}[\ell(f(\boldsymbol{x}),y)].
\end{equation*}
$\EE$ denotes the expectation and its subscript indicates the distribution with respect to which the expectation is taken. 
Typically, $\ell$ is \emph{classification-calibrated} \cite{bartlett2006convexity}, that is, the global minimizer of $\R(f;\ell)$ is the same as that of $\R(f;\ell_{01})$. 
$\ell_{01}$ is the zero-one loss is defined by $\ell_{01}(y,z)=\II(y=z)$ where $\II$ is the indicator function.
The \emph{Bayes optimal classifier} that minimizes $\R(f;\ell)$ is given by $f^*=\argmin_f\R(f;\ell)$, where the optimality is defined over all measurable functions.
We denote by $\R^*\doteq\R(f^*)$ the corresponding Bayes risk under the clean distribution.

Denote by $\tilde{\ell}:\RR^k\times\S\rightarrow\RR^+$ a suitably modified $\ell$ for use with PLs (defined in Section~\ref{subsec:apl}).
Similarly, the \emph{PLL risk} under $p(\boldsymbol{x},s)$ w.r.t.~\emph{PLL loss} $\tilde{\ell}$ is defined as
\begin{equation*}
	\tilde{\R}(f;\tilde{\ell})=\EE_{p(\boldsymbol{x},s)}[\tilde{\ell}(f(\boldsymbol{x}),s)].
\end{equation*}
The aim of PLL is to predict the true label for unseen instances.
However, most of the standard learning methods are hard to perform well as they tend to exhibit overfitting on the candidate labels in such scenarios \cite{lv2020progressive}.

Constructing \emph{robust} losses from the perspective of the objective function is a powerful means in weakly supervised learning \cite{manwani2013noise,ghosh2015making,menon2019can}.
Its focus is to derive the theoretical guarantee for robust losses so that the learned classifier based on weak supervision approximates the Bayes optimal classifier.
Concretely, a loss $\tilde{\ell}$ is robust to PLs (more specifically the risk minimization with $\tilde{\ell}$ is asymptotically robust to PLs) if it guarantees that the \emph{optimal PLL classifier} $\tilde{f}^*=\argmin_f\tilde{\R}(f;\tilde{\ell})$ converges to the Bayes optimal classifier.
\begin{definition}[PL-robustness]\label{def:robust}
	We say that a loss $\tilde{\ell}$ is robust to PL data (PL-robust) if for any $p(\boldsymbol{x},y)\in\X\times\Y$, $\R(\tilde{f}^*)-\R^*$ is bounded.
\end{definition}
$\R(\tilde{f}^*)-\R^*$ is bounded means that $\tilde{f}^*$ learned from PL data has a similar classification error to $f^*$ on the supervised data, i.e., minimizing $\tilde{\R}$ yields an approximate solution that minimizes $\R$.
A guarantee of robustness thus sets an analogous calibration theory \cite{bartlett2006convexity} of PLL.
Let $\tilde{\R}^*\doteq\R(\tilde{f}^*)$.
Then the robustness condition will often be rewritten as that $\tilde{\R}(f^*)-\tilde{\R}^*$ is bounded, which is slightly weak because it only signifies that $f^*$ is the approximated minimizer of $\tilde{\R}(f^*)$, but does not guarantee the classification performance of $\tilde{f}^*$ on the supervised data.

In statistical learning theory, \emph{consistency} \cite{mohri2018foundations} is another important concept.
We use the superscript $^{\bigstar}$ to indicate the optimal solution over a given hypothesis class $\F$, i.e., $f^{\bigstar}=\argmin_{f\in\F}\R(f;\ell),\ \tilde{f}^{\bigstar}=\argmin_{f\in\F}\tilde{\R}(f;\tilde{\ell})$.
Suppose $\hat{\tilde{f}}=\argmin_{f\in\F}\frac{1}{n}\sum_{i=1}^n\tilde{\ell}(f(\boldsymbol{x}_i),s_i)$ is the PLL empirical risk minimizer.
The quality of $\hat{\tilde{f}}$ with respect to $f^{\bigstar}$ is measured by the \emph{estimation error}:
\begin{equation}\label{eq:estimation}
	\R(\hat{\tilde{f}})-\R(f^{\bigstar})=\underbrace{(\R(\hat{\tilde{f}})-\R(\tilde{f}^{\bigstar})}_{\text{RHS1}}+\underbrace{(\R(\tilde{f}^{\bigstar})-\R(f^{\bigstar}))}_{\text{RHS2}}.
\end{equation}
If as $n\rightarrow\infty$, there is $\R(\hat{\tilde{f}})\rightarrow \R(f^{\bigstar})$, we say the PLL is consistent.
According to the universal approximation theorem \cite{cybenko1989approximation,anthony2009neural} that using a proper DNN, the hypothesis space $\F$ is sufficiently complex to contain the Bayes optimal classifier, we have $f^*=f^{\bigstar},\ \tilde{f}^*=\tilde{f}^{\bigstar}$.
Thanks to this, the concepts of robustness and consistency can be well connected in deep learning: RHS2 in Equation~(\ref{eq:estimation}) is just the robustness measure.
Therefore, consistency is a sufficient but not a necessary condition of robustness.
Although robustness is a weaker property than consistency, its advantage lies in no need to design an ad-hoc loss for each specific data generation process, which is generally required in the consistent methods.
In conclusion, robustness is a common and critical theoretical guarantee in supervised learning, but the mechanism by which it might be achieved remains barely understood in PLL.
To the best of our knowledge, this is the first work to analyze the robustness of PLL.

\begin{table*}
	\caption{Bounds of multi-class losses, including the mean absolute error (MAE) loss, the mean square error (MSE) loss, the reverse cross entropy (RCE) loss \cite{wang2019symmetric}, the generalized cross entropy (GCE) loss, the partially Huberised cross entropy (PCE) loss \cite{menon2019can}, the categorical cross entropy (CCE) loss, and the focal loss (FL) \cite{lin2017focal}.}
	\label{tab:loss}
	\centering
	\renewcommand\arraystretch{2}
	\begin{Large}
		\resizebox{\linewidth}{!}{
			\begin{tabular}{ll|c|c}
				\toprule
				\multicolumn{2}{c|}{Loss function} & Bound of loss & Bound of the sum of losses over all classes \\
				\midrule
				\multicolumn{1}{l|}{MAE}           & $\ell(f(\boldsymbol{x}),i)=||\boldsymbol{e}^i-f(\boldsymbol{x})||_1$ & $0\leq\ell(f(\boldsymbol{x}),i)\leq 2$ & $\sum_{i=1}^k\ell(f(\boldsymbol{x}),i)=2k-2$ \\
				\multicolumn{1}{l|}{MSE}           & $\ell(f(\boldsymbol{x}),i)=||\boldsymbol{e}^i-f(\boldsymbol{x})||_2^2$ & $0\leq\ell(f(\boldsymbol{x}),i)\leq 2$ & $k-1\leq\sum_{i=1}^k\ell(f(\boldsymbol{x}),i)\leq 2k-2$ \\
				\multicolumn{1}{l|}{RCE}& $\ell(f(\boldsymbol{x}),i)=-\sum_{j=1}^k g_j(\boldsymbol{x})\log \boldsymbol{e}^i_j$ & $0\leq\ell(f(\boldsymbol{x}),i)\leq -A, A<0$ & $\sum_{i=1}^k\ell(f(\boldsymbol{x}),i)=A-Ak$  \\
				\multicolumn{1}{l|}{GCE}           & $\ell(f(\boldsymbol{x}),i)=\frac{1-g_i(\boldsymbol{x})^q}{q}$ & $0\leq\ell(f(\boldsymbol{x}),i)\leq \frac{1}{q},\ q\in(0,1]$ & $\frac{k-k^{1-q}}{q}\leq\sum_{i=1}^k\ell(f(\boldsymbol{x}),i)\leq\frac{k-1}{q}$ \\
				\multicolumn{1}{l|}{\multirow{2}{*}{PCE}}     & \multirow{2}{*}{$\ell(f(\boldsymbol{x}),i)= \begin{cases}
						-\tau g_i(\boldsymbol{x})+\log\tau+1, &\text{if\ }g_i(\boldsymbol{x})\leq\frac{1}{\tau},\\
						-\log g_i(\boldsymbol{x}) &\text{otherwise}	
					\end{cases}$} & \multirow{2}{*}{$0\leq\ell(f(\boldsymbol{x}),i)\leq\log\tau+1, \tau>1$} & $k\log k\leq\sum_{i=1}^k\ell(f(\boldsymbol{x}),i)\leq(k-1)(\log\tau+1), \text{if\ }k\leq\tau$ \\
				\multicolumn{1}{l|}{} & & & $k-\tau+k\log\tau\leq\sum_{i=1}^k\ell(f(\boldsymbol{x}),i)\leq(k-1)(\log\tau+1), \text{if\ }k>\tau$ \\
				\multicolumn{1}{l|}{CCE}  & $\ell(f(\boldsymbol{x}),i)=-\log g_i(\boldsymbol{x})$ & $\ell(f(\boldsymbol{x}),i)\geq 0$ & Unbounded\\
				\multicolumn{1}{l|}{FL}  & $\ell(f(\boldsymbol{x}),i)=-(1-g_i(\boldsymbol{x}))^\tau\log g_i(\boldsymbol{x})$ & $\ell(f(\boldsymbol{x}),i)>0,\ \tau>0$ & Unbounded\\     
				\bottomrule                   
		\end{tabular}}
	\end{Large}
\end{table*}

\section{Methodology}\label{sec:method}

In this section, we propose a family of APL loss functions for PLs, and introduce five data generation processes.

\subsection{A Family of Average PL (APL) Losses}\label{subsec:apl}

In this paper, we propose a family of loss functions named the average PL (APL) losses following the principled ABS:
\begin{equation}\label{eq:apl}
	\tilde{\ell}(f(\boldsymbol{x}),s)=\frac{1}{|s|}\sum\nolimits_{i\in s}\ell(f(\boldsymbol{x}),i),
\end{equation}
where $|\cdot|$ represents the cardinality.
Our learning formulation is built on a simple scheme that combines multiple multi-class losses on the individual candidate. 
For example, we can use the GCE or CCE loss as the component $\ell$.
If $s$ is a singleton, the APL loss reduces to the ordinary multi-class loss. 
The idea of the APL losses comes from a practically motivated process proposed by Feng et al. \cite{feng2020provably}: they assumed that a candidate-label set is feature-independent and uniformly drawn given a specific true label, i.e., $p(s|y,\boldsymbol{x})=p(s|y)=\mathrm{const}.$ if $y\in s$, and $p(s|y)=0$ otherwise.
The generation process of noise-free PLs can thus be formalized as $p(s|\boldsymbol{x})=\sum_y p(s|y)p(y|\boldsymbol{x})\propto\sum_{y\in s}p(y|\boldsymbol{x})$. 
Then we could consider replacing the posterior with a loss and obtain $\tilde{\ell}(f(\boldsymbol{x}),s)\propto\sum_{y\in s}\ell(f(\boldsymbol{x}),y)$.
It inspires the formula of the APL losses, and the normalization term $1/|s|$ breaks the bias to training data with more candidate labels. 
The APL losses encourage the larger outputs on candidate labels, while do not explicitly guarantee the true label has the biggest score.
Ideally, a``nice'' loss $\ell$ can drive up the output of the true label implicitly resorting to the inductive bias, while a ``bad'' loss results in an inability to disambiguate.
Thus, the issue now is that which multi-class loss functions can bound $\R(\tilde{f}^*)-\R^*$ (or $\tilde{\R}(f^*)-\tilde{\R}^*$), that is, make our ABS method with the APL loss PL-robust.

Let us give an motivating example.
$\{z_1,z_2\}\in s$ are two candidate labels of an instance $\boldsymbol{x}$, and $z_1$ is true.
Then the APL loss of $f$ on this sample is $\tilde{\ell}(f(x),s)=\frac{1}{2}[\ell(f(x),z_1)+\ell(f(x),z_2)]$.
We would like to increase $g_{z_1}(\boldsymbol{x})$ to get it close to 1 so that $g_{z_2}(\boldsymbol{x})$ is decreased, signifying $f$ successfully remembers the true label without the interference of $z_2$.
Paradoxically, because all candidate labels contribute to minimizing $\tilde{\ell}$, neither $\ell(f(x),z_1)$ nor $\ell(f(x),z_2)$ should be too large.
Intuitively, if $\ell$ has an upper bound, then the value of $\tilde{\ell}$ is acceptable even if $g_{z_2}(\boldsymbol{x})$ is close to 0.
But if this is not the case, the optimization algorithm must keep $g_{z_2}(\boldsymbol{x})$ not too small to ensure that $\tilde{\ell}(f(x),z_2)$ is not too small, then memory for the true labels is hindered.
The empirical observations in Section~\ref{sec:intro} also confirm this inference.

We investigate a series of non-negative multi-class loss functions and prove that the APL losses with \emph{bounded} multi-class loss functions \footnote{Notice that if the marginal density $p(\boldsymbol{x})$ is compactly supported, given $\sup_{f\in\F}||f||_\infty\leq C_f$ where $C_f>0$ and $\F$ is a chosen function class, $\ell(f(\boldsymbol{x}),y)$ with any surrogate loss function $\ell$ is bounded. In this paper, ``bounded'' refers to the property of the loss function itself without regard to specific data distributions, function classes, or regularizations.} are robust to (both noise-free and noisy) PL data in Section~\ref{sec:result}.
\begin{definition}\label{def:bounded}
	We say a multi-class loss function is bounded if for any classifier $f$ and any input $\boldsymbol{x}\in\X$, it satisfies, for a constant $U$,
	\begin{equation}
		0\leq\ell(f(\boldsymbol{x}),i)\leq U, \ \forall i\in\Y,
	\end{equation}
	such that the sum of losses over all classes is also bounded by some constants $C_1$ and $C_2$:
	\begin{equation}\label{eq:sumbounded}
		C_1\leq\sum\nolimits_{i=1}^k\ell(f(\boldsymbol{x}),i)\leq C_2.
	\end{equation}
	Specially, if $C_1=C_2=C$, i.e., $\sum_{i=1}^k\ell(f(\boldsymbol{x}),i)=C$, the loss function is said to be symmetric.
\end{definition}

We examine widely-used loss functions and list their boundness in Table~\ref{tab:loss}. 
We use a one-hot representation for each label, i.e., if the label $y=i$, its label vector is represented as $\boldsymbol{e}^i$, where the $j$-th element is given by $\boldsymbol{e}^i_j=1$ if $i=j$, otherwise 0. 
Then for symmetric losses, $\ell(f(\boldsymbol{x}),j)=C/(k-1), \forall j\neq i$. 


\subsection{Data Generation Processes}
\begin{figure*}[!tp]
	\centering
	\includegraphics[width=0.8\linewidth]{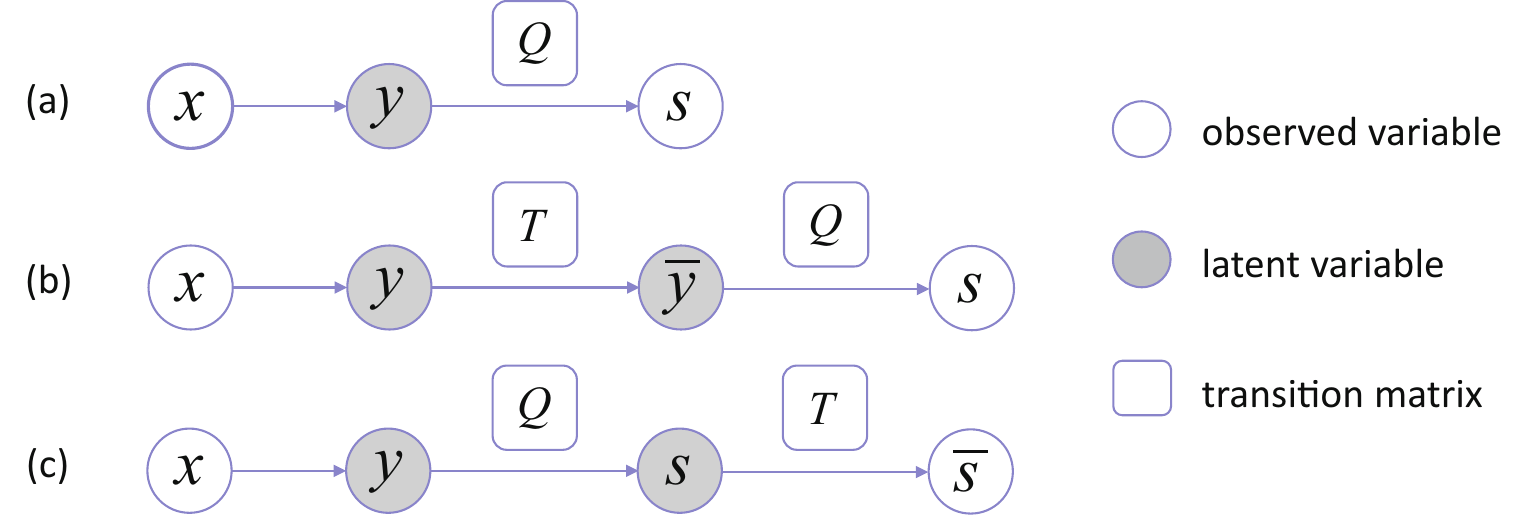} 
	\caption{Generation processes of PLs. The Filtered sampling process, Flipping process, and Global sampling process are all shown by (a), where the difference lies in the change of the PL transition matrix $\boldsymbol{Q}$. $\boldsymbol{Q}$ in (b) the Confusing process and (c) the Destructing process are constructed by either the Filtered sampling process or the Flipping process in (a). Apart from $\boldsymbol{Q}$, (b) involves the noise transition matrix $\boldsymbol{T}$ and (c) involves the complement transition matrix $\boldsymbol{T}$.} 
	\label{fig:generation}
\end{figure*}

To provide the main insights, we have to propose some assumptions on the data generation processes.
In the following, we formulate five problem settings for the generation processes of noise-free and noisy PLs which are illustrated in Figure~\ref{fig:generation}.
The first two models characterize noise-free PL data and the other three are for noisy PL data.
We follow the assumption in prior reliable PLL works \cite{liu2012conditional,feng2020provably,wen2021leveraged} and the classical feature-independent model of label noise \cite{natarajan2013learning,menon2015learning,patrini2017making,han2018co,xia2019anchor} that the observation is conditionally independent of input given the true label, as a result there are $s\perp \boldsymbol{x}\mid y, p(s|\boldsymbol{x},y)=p(s|y)$. 
Then the density of corrupted distribution is formulated as $p(\boldsymbol{x},s)=\sum_{y\in\Y}p(s|y)p(\boldsymbol{x},y)$.

\vspace{1em}                         
\noindent
\textbf{Filtered sampling process for noise-free PLs}\quad 
In a pioneering study involving PLL generation processes, the PL is assumed to be independently and uniformly sampled given a specific true label \cite{feng2020provably}.
It is inspired by a real-world cost-saving application of labeling: without any prior knowledge, the labeling system generates a random PL for each sample and asks the human annotators whether the set contains the true label.
Resampling PLs for samples for which the annotators answered ``NO''.
While it would be easy to make the labeling system have some rudimentary knowledge, for example, ``salmon'' and ``spacecraft'' do not usually appear in the same set.
Then we generalize the uniformly sampling assumption and propose the \emph{Filtered sampling process}.
Formally speaking, given a specific true label, a noise-free PL is assumed to be sampled as a whole:
\begin{equation}\label{eq:sample1}
	p(s|y) = \begin{cases}
		\eta^y_s &\text{if}\ y\in s,\\
		0 &\text{if}\ y\notin s,	
	\end{cases}
\end{equation}
where $0\le\eta^y_s\le 1$ is the \emph{sampling probability} of the label set $s$ given the true label $y$, and $\sum_{y\in s}\eta^y_s=1$. 
In particular, if the sampling probability is uniform, namely, all incorrect labels have the same probability of appearing in the label set, mathematically represented as $\eta^y_s=1/(2^{k-1}-1),\ \forall s\ni y$ here, we call this special case the \emph{Uniform filtered sampling process}.

This generation model can be written in the form of a transition matrix \cite{han2018co}.
We enumerate all the label sets $s$ in the PL space $\S$ and specify an index $l$ for each set, i.e., $l_i\in\S\ (i\in [2^k-2])$.
By this notation, we summarize all the probabilities into a \emph{PL transition matrix} $\boldsymbol{Q}^{k\times(2^k-2)}$, where $Q_{ij}=p(s=l_j|y=i)$. 
Further taking into account the assumed data distribution in Equation~(\ref{eq:sample1}), we can instantiate $\boldsymbol{Q}$ as $Q_{ij}=\eta^i_{l_j}$ if $i\in l_j$, otherwise $Q_{ij}=0$. 
Then for all $j\in [2^k-2]$, there is $p(s=l_j|\boldsymbol{x})=\sum_{i\in l_j}p(s=l_j|y=i)p(y=i|\boldsymbol{x})$.
Thus we have
\begin{equation}\label{eq:sample2}
	p(\boldsymbol{x},s)=\boldsymbol{Q}^\top p(y|\boldsymbol{x})p(\boldsymbol{x}),
\end{equation}
where $^\top$ denotes the transpose.

\vspace{1em}                         
\noindent
\textbf{Flipping process for noise-free PLs}\quad 
In the real world, however, there are complex and varying correlations between categories, making it common to get some similar categories mixed up, so that some combinations of labels appear more frequently than others.
The probability of an incorrect label appearing in the PL of a sample depends on how similar it is to its true label.
We therefore propose the \emph{Flipping process} to model this more general scenario, where a noise-free PL is supposed to be generated by adding each label to the candidate-label set independently:
\begin{equation}\label{eq:flip1}
	p(s|y)=M\prod\nolimits_{i\in s}\eta^y_i\prod\nolimits_{i\notin s}(1-\eta^y_i),
\end{equation} 
where
\begin{equation*}
	\eta^y_i=p(i\in s|y), \forall i\in\Y,\ \ \ \ M=1/\big(1-\prod\nolimits_{i\neq y}\eta^y_i\big).
\end{equation*} 
$\eta^y_i$ is the \emph{flipping probability} that depicts the probability of $i$-label being included into the candidate-label set given the specific class label $y$, and $\eta^y_y=1$. 
For $i\neq y$, it satisfies $0\le\eta^y_i<1$. 
$M$ excludes the set whose cardinality equals $k$ by re-sampling. 
Likewise, we consider a special case where the flipping probability is uniform, i.e., $\eta^y_i=\eta$, and we name it the \emph{Uniform flipping process}.

Similarly, the PL transition matrix can be formulated as $\boldsymbol{Q}^{k\times k}$ where $Q_{ij}=p(j\in s|y=i)=\eta^i_j$ and the diagonal elements of $\boldsymbol{Q}$ are all 1.
If $q(\boldsymbol{x})$ is a $k$-dimension vector where the $j$-th element $q_j(\boldsymbol{x})$ is the probability $p(j\in s|\boldsymbol{x})$, then 
\begin{equation}\label{eq:flip2}
	q(\boldsymbol{x})=\boldsymbol{Q}^\top p(y|\boldsymbol{x}),\ \ \ \ p(\boldsymbol{x},s)=\prod\nolimits_{i\in s}q_i(\boldsymbol{x})p(\boldsymbol{x}).
\end{equation}

\vspace{1em}                         
\noindent
\textbf{Global sampling process for noisy PLs}\quad 
Recall the generation scenario of the sampling process, which requires manual filtering of non-conforming label sets and resampling.
Thus noisy PLs may happen when human annotators are unprofessional.
In this way, all elements in the PL space have the probability of being sampled: 
\begin{equation}
	p(s|y) = \eta^y_s, \ \forall s\in\S,
\end{equation}
where $0\le\eta^y_s\le 1$.
Specially, the \emph{Uniform global sampling process} refers to the case where the sampling probability for all noise-free PLs is $(1-\gamma)/(2^{k-1}-1)$, and that for all noisy PLs equals $\gamma/(2^{k-1}-1)$.
In addition, the density $p(\boldsymbol{x},s)$ takes the same form as Equation~(\ref{eq:sample2}) while even if $i\notin s^j$, $Q_{ij}$ may be larger than 0.

In the following, we model two types of generation processes of noisy PLs in terms of how noise-free PLs are contaminated, where the true class is obfuscated by another similar class, or noise-free PLs are deliberately corrupted, respectively.

\vspace{1em}                         
\noindent
\textbf{Confusing process for noisy PLs}\quad 
In this type of setting, the true class was (accidentally) confused with other (similar) classes, leading to the misuse of an incorrect label as the original true label in the PL generation process.
Therefore, the \emph{Confusing process} consists of two steps.

First, the true label is corrupted.
Suppose the class-conditional label noise (CCN) model \cite{natarajan2013learning,menon2015learning,patrini2017making,han2018co,xia2019anchor}---the most widely-used model for noisy label classification---is applied, where each instance from class $y$ has a fixed probability of being assigned to label $i$, that is
\begin{equation}\label{eq:noisylabel}
	\bar{y} = \begin{cases}
		y &\text{with probability}\ 1-\gamma^y,\\
		i, i\in\Y, i\neq y &\text{with probability}\ \bar{\gamma}^y_i,	
	\end{cases}
\end{equation}
where $0\le\gamma^y\le 1$ is the \emph{label noise rate}. 
The noise is said to be uniform if $\gamma_y=\gamma$ and $\bar{\gamma}^y_i=\gamma/(k-1)$, otherwise it is said to be asymmetric. 
The corrupting step can be formalized by the \emph{noise transition matrix} $\boldsymbol{T}$ \cite{patrini2017making}, where $T_{ij}=p(\bar{y}=j|y=i)$.
Second, the corrupted label $\bar{y}$ serves as the true label $y$ to generate candidate labels, which signifies we require noisy PLs to contain $\bar{y}$, in the same way that noise-free PLs must contain $y$.
At this point, candidate labels are generated according to the previously proposed Filtered sampling process or Flipping process.
Accordingly, the following equation holds:
\begin{equation}\label{eq:confusing}
	p(\boldsymbol{x},s)=p(\boldsymbol{x})\sum_{y\in\Y}\boldsymbol{T}^{-\top} p(\bar{y}|\boldsymbol{x})p(s|y),
\end{equation}
where $p(s|y)$ can be expanded into the form of Equation~(\ref{eq:sample1}) or Equation~(\ref{eq:flip1}).

\vspace{1em}                         
\noindent
\textbf{Destructing process for noisy PLs}\quad 
We believe that noise-free PLs can also be (intentionally) destructed.
For example, there may exist spammers who deliberately choose label sets that are totally irrelevant to the tasks.
Hence we propose the \emph{Destructing process} that also contains two steps. 

Noise-free PLs are first generated by the Filtered sampling process or Flipping process, followed by taking its complement with the \emph{set flipping rate} $\gamma_s$: 
\begin{equation}\label{eq:binarynoisylabel}
	s = \begin{cases}
		s &\text{with probability}\ 1-\gamma_s,\\
		\overline{s} &\text{with probability}\ \gamma_s,
	\end{cases}
\end{equation}
where $0\le\gamma_s\le 1$. 
Constructing the \emph{complement transition matrix} $\boldsymbol{T}^{(2^k-2)\times(2^k-2)}$, each row (column) of which represents a fixed label set in the PL space.
$T_{ij}=\gamma_s$ if the $i$-th and the $j$-th label set are complementary to each other, $T_{ii}=1-\gamma_s$ for all $i\in [2^k-2]$, and 0 otherwise.
Then the density function of the PL distribution $p(\boldsymbol{x},s)$ is multiplied by $\boldsymbol{T}$ on the basis of Equation~(\ref{eq:sample2}) or Equation~(\ref{eq:flip2}).

\section{Theoretical Results}\label{sec:result}

The studied generation processes of PLs allow us to theoretically understand the properties of the APL losses. 
In this section, we detail sufficient conditions under which multi-class loss functions make PLL with the APL losses robust in various scenarios, and conclude some instructive findings.

\subsection{Robustness to Noise-Free PLs}

\begin{theorem}\label{thm:sampling}
	With the Filtered sampling process, suppose $\R^*=0$ and $\forall i\neq y,\ \sum_{s:i\in s}\eta^y_s<1$, then
	
	(1) for any symmetric loss, $\tilde{f}^*=f^*$; 
	
	(2) for any bounded loss, $0\leq\tilde{\R}(f^*)-\tilde{\R}^*\leq A(C_2-C_1)$, where $A=\sum_{i=1}^{k-1}\frac{1}{i}\sum_{s:|s|=i}\eta^y_s$.
\end{theorem}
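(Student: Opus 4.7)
My plan is to work pointwise in $\boldsymbol{x}$. Because $\R^*=0$, the Bayes classifier must satisfy $\ell(f^*(\boldsymbol{x}),y)=0$ almost surely, and the convention $\ell(f,i)=0\iff f=\boldsymbol{e}^i$ forces $p(y\mid\boldsymbol{x})$ to concentrate at some deterministic $y(\boldsymbol{x})$ with $f^*(\boldsymbol{x})=\boldsymbol{e}^{y(\boldsymbol{x})}$. The lower bound $0$ in part (2) is immediate from the pointwise optimality of $f^*_{\rm PL}$ (since $f$ ranges over all measurable functions). The substance lies entirely in the upper bound, which I establish pointwise in $\boldsymbol{x}$ and then integrate.

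The first step is to rewrite the conditional PL risk. Under the Sampling Process, $p(s\mid y)=\eta_{ys}\II[y\in s]$, so swapping the order of summation gives
\[
\R_{\rm PL}(f\mid\boldsymbol{x})=\sum_{s:\,y\in s}\frac{\eta_{ys}}{|s|}\sum_{i\in s}\ell(f(\boldsymbol{x}),i)=\sum_{i=1}^k w_i(y)\,\ell(f(\boldsymbol{x}),i),
\]
where $w_i(y):=\sum_{s:\,\{y,i\}\subseteq s}\eta_{ys}/|s|$. Grouping $s$ by its cardinality shows $w_y(y)$ coincides with the $A$ in the theorem, while $\sum_{s:y\in s}\eta_{ys}=1$ forces $\sum_i w_i(y)=1$. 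The hypothesis $\sum_{s:i\in s}\eta_{ys}<1$ for every $i\neq y$ is exactly $B_i(y):=A-w_i(y)=\sum_{s:\,y\in s,\,i\notin s}\eta_{ys}/|s|>0$, so the weight on the true class strictly dominates. This yields the decomposition
\[
\R_{\rm PL}(f\mid\boldsymbol{x})=A\sum_{i=1}^k\ell(f(\boldsymbol{x}),i)-\sum_{i\neq y}B_i(y)\,\ell(f(\boldsymbol{x}),i),
\]
which drives everything below.

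Next I subtract the two pointwise risks. Writing $\Delta_i:=\ell(f^*,i)-\ell(f^*_{\rm PL},i)$ and $\Delta_{\rm sum}:=\sum_i\Delta_i$, and using $\ell(f^*,y)=0$ together with $\sum_{i\neq y}\Delta_i=\Delta_{\rm sum}+\ell(f^*_{\rm PL},y)$ so that the $A\,\ell(f^*_{\rm PL},y)$ terms cancel, the algebra collapses to the clean identity
\[
\R_{\rm PL}(f^*\mid\boldsymbol{x})-\R_{\rm PL}(f^*_{\rm PL}\mid\boldsymbol{x})=A\,\Delta_{\rm sum}-\sum_{i\neq y}B_i(y)\,\Delta_i.
\]
Boundedness of the loss gives $\Delta_{\rm sum}\le C_2-C_1$. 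The non-routine ingredient is $\Delta_i\ge 0$ for $i\neq y$, i.e.\ $\ell(\boldsymbol{e}^y,i)\ge\ell(f^*_{\rm PL},i)$: for every bounded loss in Table~\ref{tab:loss} the per-class maximum $U$ is attained at any wrong one-hot vertex (MAE: $\ell(\boldsymbol{e}^y,i)=2=U$; GCE: $1/q=U$; similarly for MSE, RCE, PCE), so $\Delta_i=U-\ell(f^*_{\rm PL},i)\ge 0$. With $B_i(y)\ge 0$ this makes $\sum_{i\neq y}B_i(y)\Delta_i\ge 0$, and the identity above gives $\R_{\rm PL}(f^*\mid\boldsymbol{x})-\R_{\rm PL}(f^*_{\rm PL}\mid\boldsymbol{x})\le A(C_2-C_1)$. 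Taking $\EE_{p(\boldsymbol{x},y)}$ proves (2). Part (1) then falls out as the specialization $C_1=C_2=C$: the upper bound collapses to $0$ and matches the trivial lower bound, forcing pointwise equality of the two conditional risks; strict positivity of each $B_i(y)$ combined with the strict gap $\sum_{i\neq y}[\ell(\boldsymbol{e}^y,i)-\ell(f,i)]>0$ whenever $f\neq\boldsymbol{e}^y$ (which follows from $\sum_i\ell(f,i)=C$ being constant) upgrades the equality to uniqueness of the pointwise minimizer, giving $f^*_{\rm PL}=f^*$.

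The main obstacle is the max-at-wrong-vertex property $\ell(\boldsymbol{e}^y,i)\ge\ell(f,i)$ for $i\neq y$: it does all the sign control on the $B_i$-weighted term but is not forced by Definition~\ref{def:bounded} alone, so the written proof must either state it as an implicit hypothesis on $\ell$ or verify it loss-by-loss for the functions in Table~\ref{tab:loss}. A secondary point is the reading of $A$: since $A=A(y)$ depends on $y$ through $\eta_{ys}$, the stated inequality is best understood in expectation (equivalently as the pointwise conditional statement above). Everything else is routine—swapping summations, adding and subtracting, and using $\sum_iw_i(y)=1$.
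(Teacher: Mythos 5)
Your proof is correct and follows essentially the same route as the paper's: your decomposition $\R_{\rm PL}(f\mid\boldsymbol{x})=A\sum_i\ell(f,i)-\sum_{i\neq y}B_i(y)\ell(f,i)$ is exactly the paper's $AC-\sum_{i\neq y}(A-h(i))\ell(f,i)$ with $w_i(y)=h(i)$ and $B_i(y)=A-h(i)$, and both arguments hinge on $\R^*=0$ forcing $f^*(\boldsymbol{x})=\boldsymbol{e}^y$ so that $\ell(f^*,i)=U$ for $i\neq y$ and the $B_i$-weighted term has the right sign. The ``max-at-wrong-vertex'' property you flag as an implicit hypothesis is indeed also used tacitly in the paper's proof, and deriving part (1) as the $C_1=C_2$ specialization of part (2) rather than by a separate argument is only a cosmetic difference.
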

$A$ is the \emph{weighted sum} of $p(y\in s|y)$. 
Theorem~\ref{thm:sampling} shows that under certain conditions, the risk of the Bayes classifier on PL data approaches the minimal PLL risk w.r.t.~bounded losses.
The tighter the bound of the bounded loss is, the more robust the APL loss is, and the extreme case is achieved by the symmetric loss which leads to statistical consistency (refer to Section~\ref{subsec:robust}).
The key condition for PL-robustness lies in the sampling probability.
Since $\sum_{y\in s}\eta^y_s=1$, the condition $\sum_{s:i\in s}\eta^y_s<1, \ \forall i\neq y$ indicates that any labels other than the true one are not necessarily included in the candidate-label set, i.e., \emph{the domination of true labels}.
There is another constraint $\R^*=0$ that means the classes are separable in fully-supervised classification if the multi-class loss $\ell$ is classification-calibrated.
Note that experimental results later show that even if this constraint is not satisfied, bounded losses still show good empirical PL-robustness.
We can see that the conditions for robustness do not depend on the clean distribution.

Theorem~\ref{thm:sampling} immediately leads to a special uniform case.

\begin{corollary}\label{cor:sampling}
	With the Uniform filtered sampling process, for any symmetric loss, $\tilde{f}^*=f^*$; for any bounded loss, $0\leq\R(\tilde{f}^*)-\R^*\leq\frac{A'(C_2-C_1)}{A-A'}$, where $A=\frac{1}{2^{k-1}-1}\sum_{j=1}^{k-1}\frac{1}{j}\tbinom{k-1}{j-1}$ and $A'=\frac{1}{2^{k-1}-1}\sum_{j=2}^{k-1}\frac{1}{j}\tbinom{k-2}{j-2}$. 
\end{corollary}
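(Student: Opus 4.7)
The plan is to specialize Theorem~\ref{thm:sampling} to the uniform case by explicit counting, and then sharpen the one-sided conclusion into the two-sided bound on $\R(f^*_{\rm PL})-\R^*$ via a direct decomposition of the PL risk into a classification-risk term plus a sum-over-all-classes term. First I would verify that the hypotheses of Theorem~\ref{thm:sampling} are in force. Under the Uniform Sampling Process, every $s$ with $y\in s$ (excluding $\Y$ itself) has $\eta_{ys}=1/(2^{k-1}-1)$. The number of subsets containing both $y$ and $i$ but not equal to $\Y$ equals $2^{k-2}-1$, so $\sum_{s:i\in s}\eta_{ys}=(2^{k-2}-1)/(2^{k-1}-1)<1$ for each $i\neq y$, and the counting $|\{s:|s|=j,\,y\in s\}|=\binom{k-1}{j-1}$ turns the constant in Theorem~\ref{thm:sampling} into exactly the stated $A$.

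Next I would derive a refined decomposition. Starting from
\[
\R_{\rm PL}(f)=\EE_{p(\boldsymbol{x})}\sum_y p(y|\boldsymbol{x})\sum_{s:y\in s}\frac{\eta_{ys}}{|s|}\Bigl[\ell(f(\boldsymbol{x}),y)+\sum_{i\in s,\,i\neq y}\ell(f(\boldsymbol{x}),i)\Bigr],
\]
I would interchange the sums over $s$ and $i\neq y$ and use $|\{s:|s|=j,\,\{y,i\}\subseteq s,\,s\neq\Y\}|=\binom{k-2}{j-2}$, which is $A'(2^{k-1}-1)$ after dividing by $j$ and summing. This collapses the expression to
\[
\R_{\rm PL}(f)=(A-A')\,\R(f)+A'\,\EE_{p(\boldsymbol{x})}\sum_{i=1}^k\ell(f(\boldsymbol{x}),i).
\]
The symmetric-loss claim is then immediate: the second term equals the constant $A'C$, so $\R_{\rm PL}$ and $\R$ share their minimizers and $f^*_{\rm PL}=f^*$.

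For bounded losses the identity yields the sandwich $(A-A')\R(f)+A'C_1\le\R_{\rm PL}(f)\le(A-A')\R(f)+A'C_2$. Evaluating at $f=f^*$ and using $\R^*=0$ (inherited from Theorem~\ref{thm:sampling}) gives $\R_{\rm PL}(f^*)\le A'C_2$, while evaluating at $f^*_{\rm PL}$ gives $\R_{\rm PL}^*\ge A'C_1$, so subtracting yields $\R_{\rm PL}(f^*)-\R_{\rm PL}^*\le A'(C_2-C_1)$. For the remaining bound, chain $(A-A')\R(f^*_{\rm PL})+A'C_1\le\R_{\rm PL}(f^*_{\rm PL})\le\R_{\rm PL}(f^*)\le A'C_2$, solve for $\R(f^*_{\rm PL})\le A'(C_2-C_1)/(A-A')$, and conclude with $\R^*=0$.

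The main obstacle is the combinatorial bookkeeping in the refined decomposition: one must verify that the coefficient of $\ell(f(\boldsymbol{x}),y)$ is $A$ and that the coefficient of $\ell(f(\boldsymbol{x}),i)$ for every $i\neq y$ is the same constant $A'$, so that adding and subtracting $A'\ell(f(\boldsymbol{x}),y)$ produces the clean $(A-A')\R(f)+A'\sum_i\ell$ form. Once this symmetry-across-incorrect-labels (a special feature of uniform sampling, not available in Theorem~\ref{thm:sampling}) is established, the two bounds fall out of routine sandwich arithmetic and the division by the strictly positive $A-A'$.
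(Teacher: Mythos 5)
Your core decomposition $\R_{\rm PL}(f)=(A-A')\,\R(f)+A'\,\EE_{p(\boldsymbol{x},y)}\sum_{i=1}^k\ell(f(\boldsymbol{x}),i)$ is exactly the identity the paper's proof rests on, your combinatorial counts ($\binom{k-1}{j-1}$ sets of size $j$ containing $y$, $\binom{k-2}{j-2}$ containing both $y$ and $i$, hence $A-A'=\frac{1}{2^{k-1}-1}\sum_{j=1}^{k-1}\frac{1}{j}C_{k-2}^{j-1}>0$) are correct, and the symmetric-loss argument is handled identically. So the route is essentially the paper's.

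The one defect is your appeal to $\R^*=0$, which you describe as ``inherited from Theorem~\ref{thm:sampling}.'' It is not a hypothesis of the corollary, and the paper explicitly stresses that the separability constraint is \emph{eliminated} in the uniform case --- that is precisely what makes the stronger conclusion $\R(f^*_{\rm PL})-\R^*\le A'(C_2-C_1)/(A-A')$ meaningful in the stochastic scenario. As written, your argument only proves the corollary under an extra assumption it does not carry. Fortunately your own sandwich already yields both bounds without it: subtracting the lower bound evaluated at $f^*_{\rm PL}$ from the upper bound evaluated at $f^*$ gives $\R_{\rm PL}(f^*)-\R_{\rm PL}^*\le A'(C_2-C_1)+(A-A')(\R^*-\R(f^*_{\rm PL}))\le A'(C_2-C_1)$ because $\R^*\le\R(f^*_{\rm PL})$; and chaining $(A-A')\R(f^*_{\rm PL})+A'C_1\le\R_{\rm PL}(f^*_{\rm PL})\le\R_{\rm PL}(f^*)\le(A-A')\R^*+A'C_2$ gives $\R(f^*_{\rm PL})-\R^*\le A'(C_2-C_1)/(A-A')$ directly. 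Delete the two $\R^*=0$ steps and your proof coincides with the paper's.
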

Here $A$ also denotes the weighted sum of $p(y\in s|y)$, and $A'$ is the weighted sum of $p(i\in s,i\neq y|y)$, representing the weighted sum of the probabilities that an incorrect label appears in the candidate-label set given the specific true label.
The uniform sampling probability has already ensured the domination of true labels, and the separability constraint $\R^*=0$ is eliminated, which implies that even in the stochastic scenario, learning with the APL losses can be PL-robust.
Therefore, in this case, the robustness is satisfied without any constraint.
Noticed that in the uniform case, the optimal PLL classifier approaches the Bayes classifier, so that the PL-robustness is better guaranteed.

Similarly, we derive the PL-robustness conditions considering the Flipping process.

\begin{theorem}\label{thm:flipping}
	With the Flipping process, suppose $\R^*=0$, then
	
	(1) for any symmetric loss, $\tilde{f}^*=f^*$;
	
	(2) for any bounded loss, $0\leq\tilde{\R}(f^*)-\tilde{\R}^*\leq MA(C_2-C_1)$, where $A=\prod_{i\neq y}(1-\eta_{yi})+\frac{1}{2}\sum_{i=1,i\neq y}^k\eta_{yi}\prod_{j\neq y,i}(1-\eta_{yj})+\frac{1}{3}\sum_{i=1,i\neq y}^{k-1}\sum_{j=i+1,j\neq y}^{k}\eta_{yi}\eta_{yj}\prod_{m\neq y,i,j}(1-\eta_{ym})+\ldots+\frac{1}{k-1}\sum_{i=1,i\neq y}^k(1-\eta_{yi})\prod_{j\neq y,i}\eta_{yj}$.
\end{theorem}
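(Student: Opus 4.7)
The plan is to parallel the proof of Theorem~\ref{thm:sampling}, adapting the combinatorics to the Flipping Process. I first reorganize the PL risk by interchanging the sum over label sets with the sum over individual labels inside each set,
\begin{equation*}
\R_{\rm PL}(f) = \EE_{(\boldsymbol{x},y)}\Bigl[\sum_{i\in\Y}\alpha_i(y)\,\ell(f(\boldsymbol{x}),i)\Bigr],\qquad \alpha_i(y) := \sum_{\substack{s:\,y\in s,\,i\in s\\ s\neq \Y}}\frac{p(s|y)}{|s|}.
\end{equation*}
Three structural facts drive everything else. First, substituting $T = s\setminus\{y\}$ and grouping by $|T|$ matches $\alpha_y(y)$ term-by-term with the definition of $A$, so $\alpha_y(y) = MA$. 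Second, $\alpha_i(y) \le \alpha_y(y)$ for every $i$, because the sum defining $\alpha_i$ runs over a sub-collection of that defining $\alpha_y$. Third, $\sum_i \alpha_i(y) = \sum_{s}p(s|y)\sum_{i\in s}(1/|s|) = 1$. The hypothesis $\R^* = 0$ then lets me replace $y$ by the deterministic $y^*(\boldsymbol{x})$ almost surely.

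For Part~(1), I use $\sum_i \ell(f,i) = C$ to rewrite
\begin{equation*}
\sum_i \alpha_i\,\ell(f,i) = \alpha_{y^*}\, C - \sum_{i \neq y^*}(\alpha_{y^*}-\alpha_i)\,\ell(f,i).
\end{equation*}
Minimizing $\R_{\rm PL}$ therefore reduces to maximizing the non-negative linear form $\sum_{i\neq y^*}(\alpha_{y^*}-\alpha_i)\ell(f,i)$ on the polytope $\{\ell_i\in[0,C/(k-1)],\,\sum_i \ell_i = C\}$. Setting $\ell_{y^*}=0$ and observing that the remaining $k-1$ coordinates, each at most $C/(k-1)$ and summing to $C$, must all equal $C/(k-1)$ identifies the unique maximizer with the profile of the Bayes classifier $f^*$ under $\R^*=0$; hence $f^* = f^*_{\rm PL}$.

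For Part~(2), applying the same identity to both $f^*$ and $f$ and subtracting gives
\begin{equation*}
\R_{\rm PL}(f^*) - \R_{\rm PL}(f) = \EE\bigl[\alpha_{y^*}(S(f^*)-S(f))\bigr] - \EE\Bigl[\sum_{i\neq y^*}(\alpha_{y^*}-\alpha_i)\bigl(\ell(f^*,i)-\ell(f,i)\bigr)\Bigr],
\end{equation*}
with $S(f) := \sum_i \ell(f,i) \in [C_1,C_2]$. The first expectation is at most $\EE[\alpha_{y^*}](C_2-C_1) = MA(C_2-C_1)$. Taking $f = f^*_{\rm PL}$, the second expectation is non-negative: because $\ell(f^*,y^*)=0$ forces the whole loss budget of $f^*$ onto the non-true-label coordinates and saturates each at the per-class upper bound $U$ (as one checks entry-by-entry for Table~\ref{tab:loss}), we have $\ell(f^*,i) \ge \ell(f^*_{\rm PL},i)$ for every $i\neq y^*$, so the residual is $\ge 0$ and the claimed bound follows after combining with the trivial inequality $\R_{\rm PL}(f^*)\ge\R_{\rm PL}^*$. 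The main obstacle is establishing this residual non-negativity for an \emph{arbitrary} bounded loss rather than for the standard ones in Table~\ref{tab:loss}: resolving it requires an LP argument on the admissible region $\{\ell_i\in[0,U],\,\sum_i\ell_i\in[C_1,C_2]\}$ to show that the maximizer of $\sum_{i\neq y^*}(\alpha_{y^*}-\alpha_i)\ell_i$ always places $\ell_{y^*}=0$, so that any slack on $S(f)$ is absorbed into the $(C_2-C_1)$ factor already captured by the first expectation.
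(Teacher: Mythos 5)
Your proposal is correct and follows essentially the same route as the paper: the paper proves Theorem~\ref{thm:flipping} by the argument of Theorem~\ref{thm:sampling} with the flipping-process combinatorics, i.e.\ the decomposition $\R_{\rm PL}(f)=\EE[\alpha_y\ell(f,y)+\sum_{i\neq y}\alpha_i\ell(f,i)]$ with $\alpha_y-\alpha_i>0$, the symmetric-loss identity for part~(1), and the $[C_1,C_2]$ sandwich plus $\ell(f^*(\boldsymbol{x}),i)=U$ for $i\neq y$ in part~(2), exactly as you do. The one caveat you flag --- that $\ell(f^*(\boldsymbol{x}),i)=U$ for all $i\neq y$ must hold for an arbitrary bounded loss, not just those in Table~\ref{tab:loss} --- is a real subtlety, but the paper's own proof asserts it without further argument, so your treatment is no less complete than the original.
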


\begin{corollary}\label{cor:flipping}
	With the Uniform flipping process, for any symmetric loss,  $\tilde{f}^*=f^*$; for any bounded loss, $0\leq\R(\tilde{f}^*)-\R^*\leq \frac{A'(C_2-C_1)}{A-A'}$, where $A=\sum_{j=1}^{k-1}\frac{1}{j}\tbinom{k-1}{j-1}\eta^{j-1}(1-\eta)^{k-j}$ and $A'=\sum_{j=2}^{k-1}\frac{1}{j}\tbinom{k-2}{j-2}\eta^{j-1}(1-\eta)^{k-j}$.
\end{corollary}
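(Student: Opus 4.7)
The plan is to specialize Theorem~\ref{thm:flipping} to the Uniform Flipping Process and, along the way, derive a refined identity for the PL risk that yields both stated bounds and lets us drop the separability assumption $\R^* = 0$. The symmetric-loss statement $f^* = f^*_{\rm PL}$ follows immediately from Theorem~\ref{thm:flipping}(1) since the Uniform Flipping Process is a special case, so the real work is in handling bounded losses.

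Setting $\eta_{yi} = \eta$ for all $i \neq y$ gives $M = 1/(1-\eta^{k-1})$. Substituting into Theorem~\ref{thm:flipping}'s $A$, every block $\tfrac{1}{j}\sum \prod\eta_{yi}\prod(1-\eta_{yj})$ with $j-1$ flipped labels collapses, after summing over the $\binom{k-1}{j-1}$ choices of which non-$y$ labels lie in $s$, to $\tfrac{1}{j}\binom{k-1}{j-1}\eta^{j-1}(1-\eta)^{k-j}$, which is exactly the $A$ of the corollary. Rather than quoting the inequality of Theorem~\ref{thm:flipping} directly, I would re-derive a cleaner expression for the PL risk. Using $p(\boldsymbol{x},s) = \sum_y p(s|y)p(\boldsymbol{x},y)$ together with the average PL loss in~\eqref{eq:apl} and swapping the order of summation yields
\begin{equation*}
  \R_{\rm PL}(f) = \EE_{p(\boldsymbol{x})} \sum_{y=1}^{k} p(y|\boldsymbol{x})\sum_{i=1}^{k} w_{yi}\,\ell(f(\boldsymbol{x}),i), \qquad w_{yi} := \sum_{s\ni i}\frac{p(s|y)}{|s|}.
\end{equation*}

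Under uniform flipping, $w_{yi}$ depends only on whether $i=y$: grouping subsets by size and noting that there are $\binom{k-1}{j-1}$ size-$j$ subsets of $[k]$ containing $y$ and $\binom{k-2}{j-2}$ size-$j$ subsets containing both $y$ and a fixed $i\neq y$ (with $j$ ranging over $1,\dots,k-1$ and $2,\dots,k-1$ respectively, since $|s|<k$ under the Flipping Process), one obtains $w_{yy} = MA$ and $w_{yi} = MA'$ for $i\neq y$. Separating the $i=y$ term then gives the key identity
\begin{equation*}
  \R_{\rm PL}(f) = M(A-A')\,\R(f) + MA'\,\EE_{p(\boldsymbol{x})}\sum_{i=1}^{k}\ell(f(\boldsymbol{x}),i).
\end{equation*}

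Sandwiching $\sum_i \ell(f(\boldsymbol{x}),i)$ between $C_1$ and $C_2$ and applying the two minimality inequalities $\R(f^*) \leq \R(f^*_{\rm PL})$ and $\R_{\rm PL}(f^*_{\rm PL}) \leq \R_{\rm PL}(f^*)$ yields both $\R_{\rm PL}(f^*) - \R_{\rm PL}^* \leq MA'(C_2 - C_1)$ and, after dividing by the positive coefficient $M(A-A')$, $\R(f^*_{\rm PL}) - \R^* \leq A'(C_2 - C_1)/(A - A')$; when the loss is symmetric, $C_1 = C_2$ makes both bounds vanish, recovering $f^* = f^*_{\rm PL}$. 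The main obstacle is the combinatorial identification of $w_{yi}$ for $i\neq y$: one must restrict correctly to subsets containing both $y$ and $i$ with $|s| < k$, which is where the shift from $\binom{k-1}{j-1}$ to $\binom{k-2}{j-2}$ and the lower summation limit $j=2$ in $A'$ originate. The rest is bookkeeping around the displayed identity, and notably no assumption on $\R^*$ or on the clean data distribution enters.
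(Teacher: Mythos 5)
Your proposal is correct and follows essentially the same route as the paper: the paper proves this corollary by mirroring the proof of Corollary~\ref{cor:sampling}, i.e., by deriving exactly your affine identity $\R_{\rm PL}(f)=M(A-A')\R(f)+MA'\,\EE_{p(\boldsymbol{x})}\sum_{i=1}^k\ell(f(\boldsymbol{x}),i)$ (with the same combinatorial identification $w_{yy}=MA$, $w_{yi}=MA'$) and then sandwiching the sum of losses between $C_1$ and $C_2$ and invoking the two minimality inequalities. Your closing observation that the $C_1=C_2$ case handles symmetric losses without assuming $\R^*=0$ is also how the paper argues it, so the earlier appeal to Theorem~\ref{thm:flipping}(1) is unnecessary but harmless.
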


Comparing Theorem~\ref{thm:flipping} and Corollary~\ref{cor:flipping} versus Theorem~\ref{thm:sampling} and Corollary~\ref{cor:sampling}, we can notice that their sufficient conditions and upper bounds of the difference in the risk are different in notations but similar in meaning.
There is no constraint on the flipping probability because the diagonal-dominance of the PL transition matrix has pledged the domination of true labels.

\subsection{Robustness to Noisy PLs}

Then we discuss the robustness to noisy PLs under three generation processes. 
For the first one we give formal statements, and for the latter two, we summarize the result in Theorem~\ref{thm:ambiguousnoise}, giving an intuitive statement on the conditions, and more formal mathematical details are deferred in Appendix A.

\begin{theorem}\label{thm:arbitrary}
	With the Global sampling process, suppose $\R^*=0$ and the domination relations hold: $d(y)>d(j)\ \forall j\neq y$ where $d(\cdot)$ is defined as
	$d(i)=\sum_{j=1}^{k-1}\frac{1}{j}\sum_{s:|s|=j,i\in s}\eta_{ys}, \forall i\in\Y$, then
	
	(1) for any symmetric loss, $\tilde{f}^*=f^*$;
	
	(2) for any bounded loss, $0\leq\tilde{\R}(f^*)-\tilde{\R}^*\leq d(y)(C_2-C_1)$.
\end{theorem}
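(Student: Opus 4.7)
The plan is to follow the Ghosh-style pointwise reduction: for each $\boldsymbol{x}$ isolate the aggregate $G(v):=\sum_i d(i)\ell(v,i)$ and exploit both the domination $d(y^{*})>d(i)$ for $i\neq y^{*}$ and the one-hot structure $\ell(\boldsymbol{e}^{y^{*}},i)=U$ (for $i\neq y^{*}$) that every loss in Table~\ref{tab:loss} satisfies. First I would unfold $\R_{\rm PL}(f)$ using the Arbitrary Sampling Process, the conditional independence $p(s|\boldsymbol{x},y)=p(s|y)=\eta_{ys}$, and the average PL loss definition, then swap the inner double sum by Fubini:
\begin{align*}
\R_{\rm PL}(f)
&= \EE_{p(\boldsymbol{x})}\sum_{y} p(y|\boldsymbol{x})\sum_{s}\eta_{ys}\frac{1}{|s|}\sum_{i\in s}\ell(f(\boldsymbol{x}),i)\\
&= \EE_{p(\boldsymbol{x})}\sum_{y} p(y|\boldsymbol{x})\sum_{i=1}^{k} d(i)\,\ell(f(\boldsymbol{x}),i),
\end{align*}
where the inner sum is exactly the theorem's $d(i)=\sum_{s:i\in s}\eta_{ys}/|s|$. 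Separability $\R^{*}=0$ turns $p(\cdot|\boldsymbol{x})$ into a point mass at $y^{*}(\boldsymbol{x})$, so everything reduces to pointwise analysis of $G$ with all $d(i)$ evaluated at $y=y^{*}$.

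For part~(1), symmetry gives $\sum_{i}\ell(v,i)=C$ together with $\ell(v,i)\le U=C/(k-1)$ for every $v$. Then
\[
G(\boldsymbol{e}^{y^{*}})-G(v)=\sum_{i\neq y^{*}}d(i)\bigl[\tfrac{C}{k-1}-\ell(v,i)\bigr]-d(y^{*})\ell(v,y^{*}),
\]
and since every bracket in the first sum is non-negative, replacing $d(i)$ by the larger $d(y^{*})$ yields $\sum_{i\neq y^{*}}d(i)[\tfrac{C}{k-1}-\ell(v,i)]\le d(y^{*})\sum_{i\neq y^{*}}[\tfrac{C}{k-1}-\ell(v,i)]=d(y^{*})\ell(v,y^{*})$, using $\sum_{i\neq y^{*}}[\tfrac{C}{k-1}-\ell(v,i)]=C-(C-\ell(v,y^{*}))=\ell(v,y^{*})$. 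The two $d(y^{*})\ell(v,y^{*})$ terms cancel, so $G(\boldsymbol{e}^{y^{*}})\le G(v)$ for all $v$, giving $f^{*}_{\rm PL}=f^{*}$.

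For part~(2), writing $C(v):=\sum_{i}\ell(v,i)\in[C_{1},C_{2}]$, I would decompose
\begin{align*}
G(\boldsymbol{e}^{y^{*}})-G(f^{*}_{\rm PL})
&= d(y^{*})\bigl(C(\boldsymbol{e}^{y^{*}})-C(f^{*}_{\rm PL})\bigr)\\
&\quad - \sum_{i\neq y^{*}}\bigl(d(y^{*})-d(i)\bigr)\bigl[\ell(\boldsymbol{e}^{y^{*}},i)-\ell(f^{*}_{\rm PL},i)\bigr].
\end{align*}
The first term is at most $d(y^{*})(C_{2}-C_{1})$ directly. For the second, the property $\ell(\boldsymbol{e}^{y^{*}},i)=U$ for $i\neq y^{*}$ (valid by inspection for every loss in Table~\ref{tab:loss}) gives $\ell(\boldsymbol{e}^{y^{*}},i)-\ell(f^{*}_{\rm PL},i)\ge 0$, and combined with the strict positivity of $d(y^{*})-d(i)$ this residual sum is non-negative, so subtracting it only tightens the bound. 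Taking expectations over $\boldsymbol{x}$ then delivers the stated $0\le\R_{\rm PL}(f^{*})-\R_{\rm PL}^{*}\le d(y)(C_{2}-C_{1})$; the lower bound is immediate from $f^{*}_{\rm PL}$ being a minimizer.

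The step I expect to be the main obstacle is justifying the one-hot property $\ell(\boldsymbol{e}^{y^{*}},i)=U$ for $i\neq y^{*}$ used in part~(2). It is not part of the abstract ``bounded'' definition and must either be asserted as a mild extra structural assumption (essentially a permutation symmetry of the loss at one-hot inputs) or verified case-by-case against Table~\ref{tab:loss}. Without it the sign of the residual sum is a priori indeterminate and the clean $d(y)(C_{2}-C_{1})$ bound does not follow immediately; a fallback would be to use the pointwise optimality of $f^{*}_{\rm PL}$, namely $\sum_{i\neq y^{*}}d(i)[\ell(\boldsymbol{e}^{y^{*}},i)-\ell(f^{*}_{\rm PL},i)]\ge d(y^{*})\ell(f^{*}_{\rm PL},y^{*})$, to control the residual, but threading this with the weights $d(y^{*})-d(i)$ rather than $d(i)$ is the delicate technical step.
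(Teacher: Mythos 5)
Your proof is correct and takes essentially the same route as the paper: the paper proves this theorem ``similarly to Theorem~\ref{thm:sampling},'' whose proof uses exactly your decomposition of the pointwise PL risk into the dominant weight times the loss sum minus $\sum_{i\neq y}\bigl(d(y)-d(i)\bigr)\ell(\cdot,i)$, invokes $\R^*=0$ to force $f^*(\boldsymbol{x})=\boldsymbol{e}^{y}$, and kills the residual by noting $\ell(f^*(\boldsymbol{x}),i)=U\geq\ell(f^*_{\rm PL}(\boldsymbol{x}),i)$ together with the positivity of the domination gaps. The ``obstacle'' you flag --- that $\ell(\boldsymbol{e}^{y},i)=U$ for $i\neq y$ --- is not an extra assumption relative to the paper: it adopts precisely this convention after Table~\ref{tab:loss} and uses it verbatim in the proof of Theorem~\ref{thm:sampling}, so your argument requires nothing the paper does not already assume.
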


\begin{corollary}\label{cor:arbitrary}
	With the Uniform global sampling process, if $\gamma<1/2$, then for any symmetric loss, $\tilde{f}^*=f^*$; for any bounded loss, $0\leq\R(\tilde{f}^*)-\R^*\leq \frac{A'(C_2-C_1)}{A-A'}$, where $A=\frac{1-\gamma}{2^{k-1}-1}\sum_{j=1}^{k-1}\frac{1}{j}\tbinom{k-1}{j-1}$ and $A'=\frac{1-\gamma}{2^{k-1}-1}\sum_{j=2}^{k-1}\frac{1}{j}\tbinom{k-2}{j-2}+\frac{\gamma}{2^{k-1}-1}\sum_{j=1}^{k-1}\frac{1}{j}\tbinom{k-2}{j-1}$.
\end{corollary}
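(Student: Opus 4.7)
The plan is to specialize Theorem~\ref{thm:arbitrary} to the uniform case and then exploit the uniformity to obtain an \emph{exact} affine decomposition of the PL risk in terms of the supervised risk, which will be sharper than the generic bound of Theorem~\ref{thm:arbitrary} and will also buy the bound on $\R(f^*_{\rm PL})-\R^*$ for free. First I would compute $d(y)$ and $d(i)$ (for $i\neq y$) from the uniform arbitrary sampling probabilities $\eta_{ys}=(1-\gamma_{\rm PL})/(2^{k-1}-1)$ when $y\in s$ and $\eta_{ys}=\gamma_{\rm PL}/(2^{k-1}-1)$ otherwise, simply by counting the number of size-$j$ subsets of $[k]\setminus\Y$'s complement that contain the labels of interest: $\binom{k-1}{j-1}$ subsets of size $j$ contain $y$, $\binom{k-2}{j-2}$ contain both $i$ and $y$, and $\binom{k-2}{j-1}$ contain $i$ but not $y$. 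Plugging into the definition of $d(\cdot)$ from Theorem~\ref{thm:arbitrary} gives exactly $d(y)=A$ and $d(i)=A'$ for all $i\neq y$.

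Next I would verify the domination condition $A>A'$ that Theorem~\ref{thm:arbitrary} requires. Using the Pascal identity $\binom{k-1}{j-1}-\binom{k-2}{j-2}=\binom{k-2}{j-1}$ telescopes the difference to
\begin{equation*}
A-A'=\frac{1-2\gamma_{\rm PL}}{2^{k-1}-1}\sum_{j=1}^{k-1}\frac{1}{j}\binom{k-2}{j-1},
\end{equation*}
so the assumption $\gamma_{\rm PL}<1/2$ is exactly what is needed for strict domination. With $A=d(y)$ and $A'=d(i)$ in hand, the symmetric-loss claim $f^*=f^*_{\rm PL}$ and the first bound $\R_{\rm PL}(f^*)-\R_{\rm PL}^*\leq A'(C_2-C_1)$ (which is tighter than Theorem~\ref{thm:arbitrary}'s $d(y)(C_2-C_1)$) will come from a finer decomposition rather than from directly quoting Theorem~\ref{thm:arbitrary}.

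The key technical step is the decomposition $\R_{\rm PL}(f)=(A-A')\R(f)+A'\,E(f)$, where $E(f):=\EE_{\boldsymbol{x}}[\sum_{i=1}^k\ell(f(\boldsymbol{x}),i)]$. This is obtained by writing $\R_{\rm PL}(f)=\EE_{\boldsymbol{x},y}\sum_{i=1}^k\ell(f(\boldsymbol{x}),i)\sum_{s:i\in s}p(s|y)/|s|$ and recognizing, from the uniform sampling structure, that the inner coefficient equals $A$ when $i=y$ and $A'$ when $i\neq y$. For a symmetric loss $E(f)\equiv C$, so $\R_{\rm PL}$ is an increasing affine function of $\R$ (slope $A-A'>0$), yielding $f^*_{\rm PL}=f^*$. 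For a bounded loss, the first bound follows from $\R_{\rm PL}(f^*)-\R_{\rm PL}(f^*_{\rm PL})=(A-A')(\R(f^*)-\R(f^*_{\rm PL}))+A'(E(f^*)-E(f^*_{\rm PL}))$, where the first summand is nonpositive and the second is at most $A'(C_2-C_1)$; the second bound follows by rearranging the optimality inequality $\R_{\rm PL}(f^*_{\rm PL})\leq\R_{\rm PL}(f^*)$ into $(A-A')(\R(f^*_{\rm PL})-\R(f^*))\leq A'(E(f^*)-E(f^*_{\rm PL}))\leq A'(C_2-C_1)$ and dividing by $A-A'$. The main obstacle I expect is the combinatorial bookkeeping in Steps~1--2, ensuring that the $1/|s|$ factor is correctly collected across all sets containing a fixed label and that the Pascal identity is applied carefully at the boundary $j=1$; once the decomposition in Step~3 is in place, the remaining arguments are short and, notably, the separability assumption $\R^*=0$ used in Theorem~\ref{thm:arbitrary} is not needed because the decomposition is exact rather than an inequality.
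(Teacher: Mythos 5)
Your proposal is correct and follows essentially the same route as the paper: the paper proves this corollary by the same exact affine decomposition $\R_{\rm PL}(f)=A'\,\EE_{p(\boldsymbol{x},y)}\sum_{i=1}^k\ell(f(\boldsymbol{x}),i)+(A-A')\R(f)$ used for Corollary~\ref{cor:sampling} (which it explicitly states carries over), with the same combinatorial identification of $A$ and $A'$ and the same sandwich argument for the two bounds, and indeed without needing $\R^*=0$.
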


\begin{figure*}[!t]
	\begin{minipage}{.04\textwidth}
		\centering
		\begin{tabular}{l}
			MNIST
		\end{tabular}
	\end{minipage}
	\hfill
	\begin{minipage}{.26\textwidth}
		\centering
		\includegraphics[width=4cm]{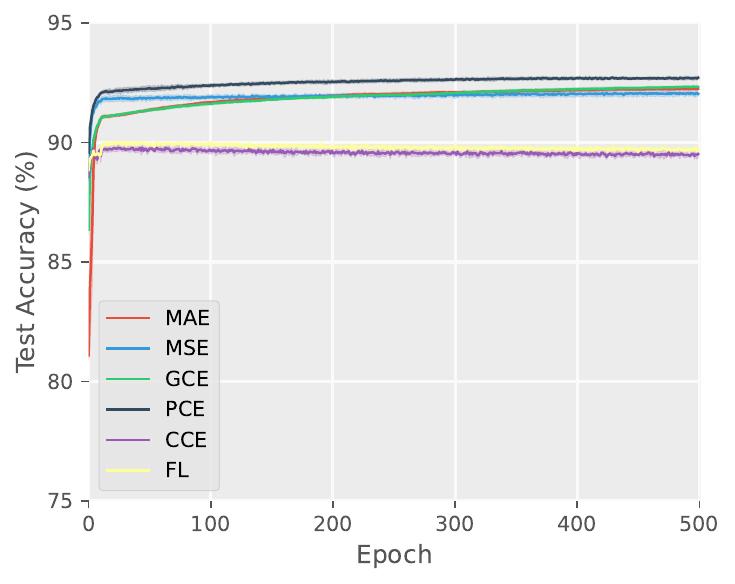}
		\centerline{\quad Linear with NF-PLs}
	\end{minipage}\hspace{-8mm} 
	\begin{minipage}{.26\textwidth}
		\centering
		\includegraphics[width=4cm]{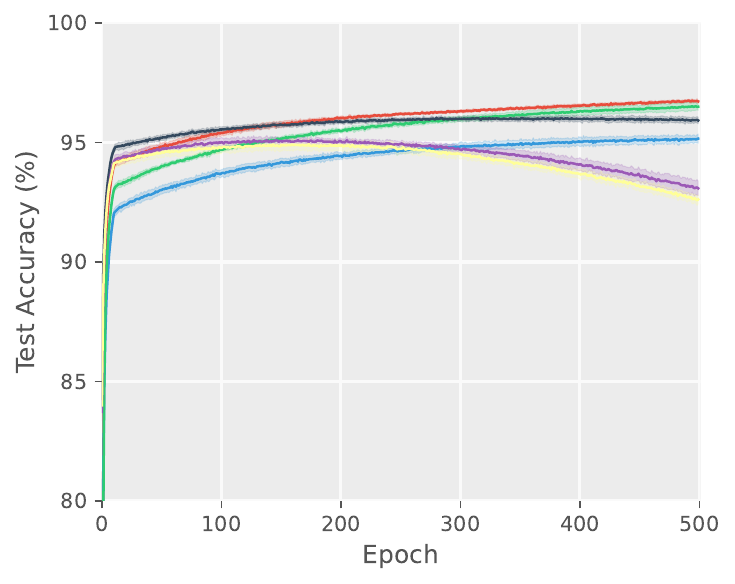}
		\centerline{\quad MLP with NF-PLs}
	\end{minipage}\hspace{-8mm}
	\begin{minipage}{.26\textwidth}
		\centering
		\includegraphics[width=4cm]{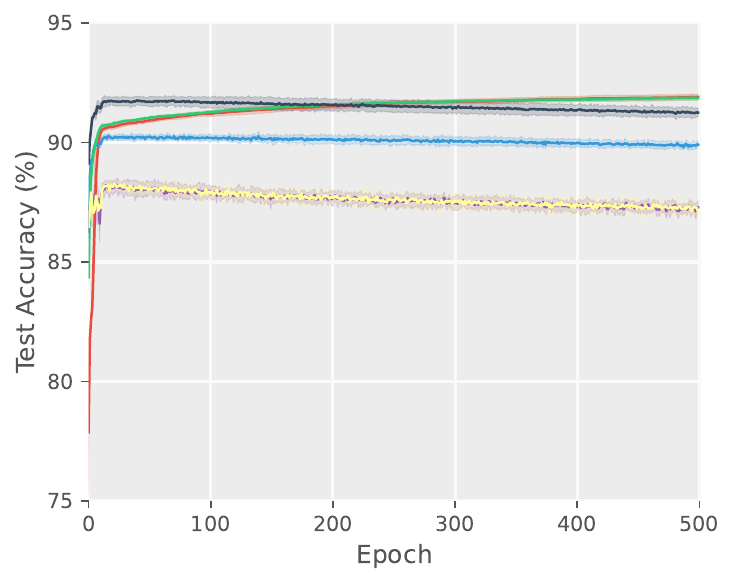}
		\centerline{\quad Linear with N-PLs}
	\end{minipage}\hspace{-8mm}
	\begin{minipage}{.26\textwidth}
		\centering
		\includegraphics[width=4cm]{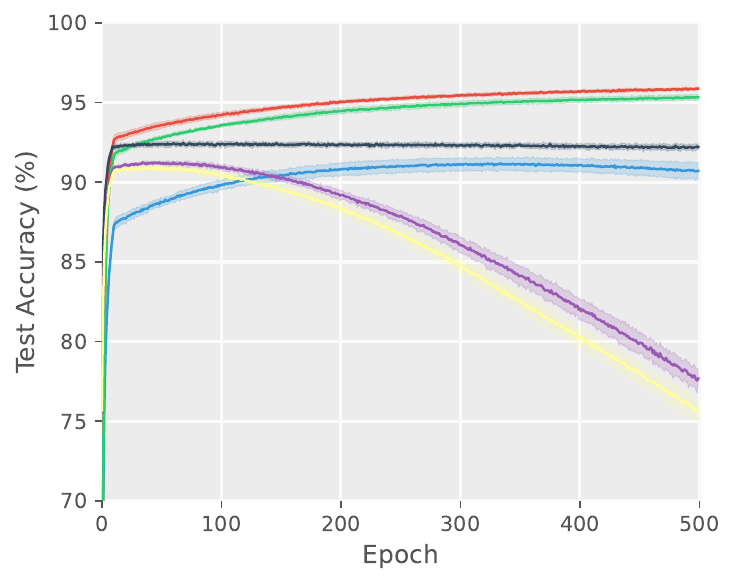}
		\centerline{\quad MLP with N-PLs}
	\end{minipage}
	
	\begin{minipage}{.04\textwidth}
		\centering
		\begin{tabular}{l}
			CIFAR-\\10
		\end{tabular}
	\end{minipage}
	\hfill
	\begin{minipage}{.26\textwidth}
		\centering
		\includegraphics[width=4cm]{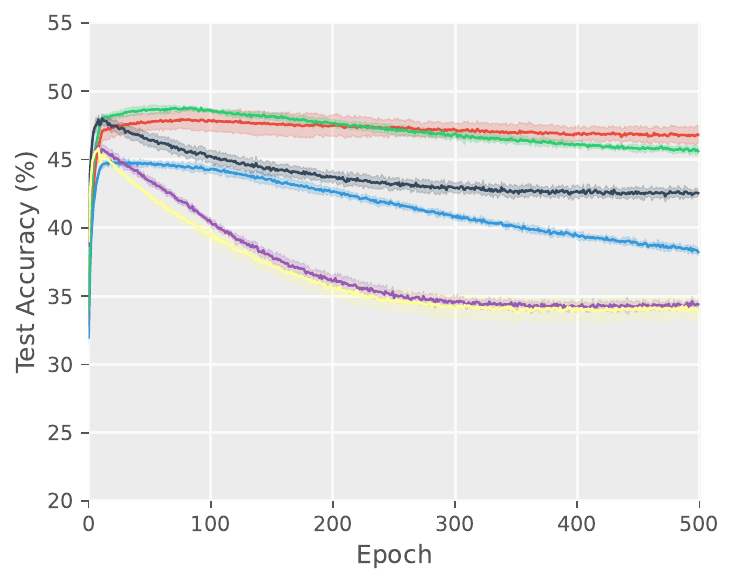}
		\centerline{\quad MLP with NF-PLs}
	\end{minipage}\hspace{-9mm} 
	\begin{minipage}{.26\textwidth}
		\centering
		\includegraphics[width=4cm]{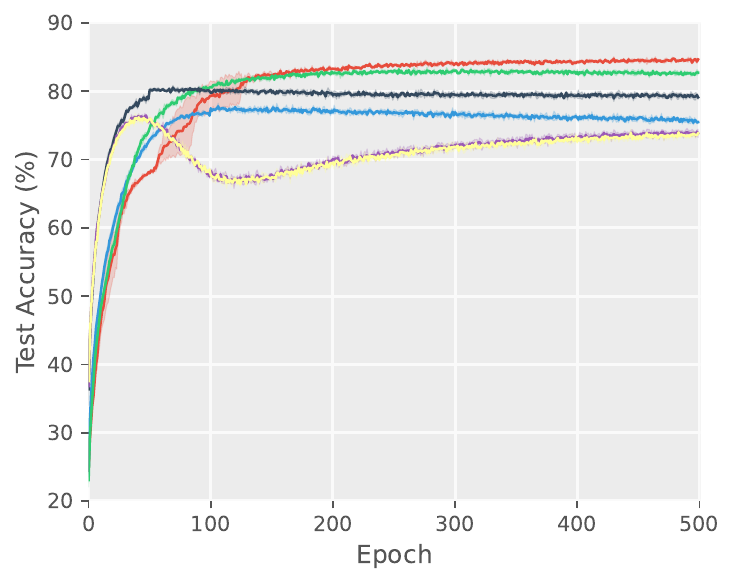}
		\centerline{\quad ConvNet with NF-PLs}
	\end{minipage}\hspace{-9mm} 
	\begin{minipage}{.26\textwidth}
		\centering
		\includegraphics[width=4cm]{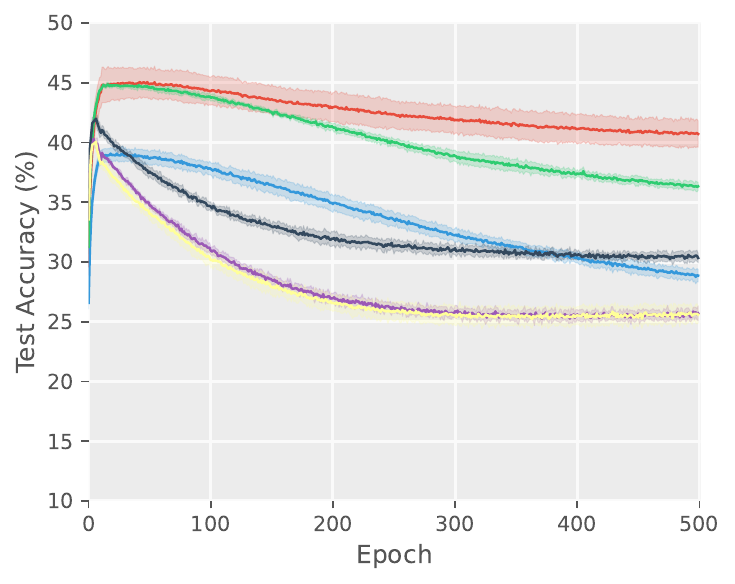}
		\centerline{\quad MLP with N-PLs}
	\end{minipage}\hspace{-9mm} 
	\begin{minipage}{.26\textwidth}
		\centering
		\includegraphics[width=4cm]{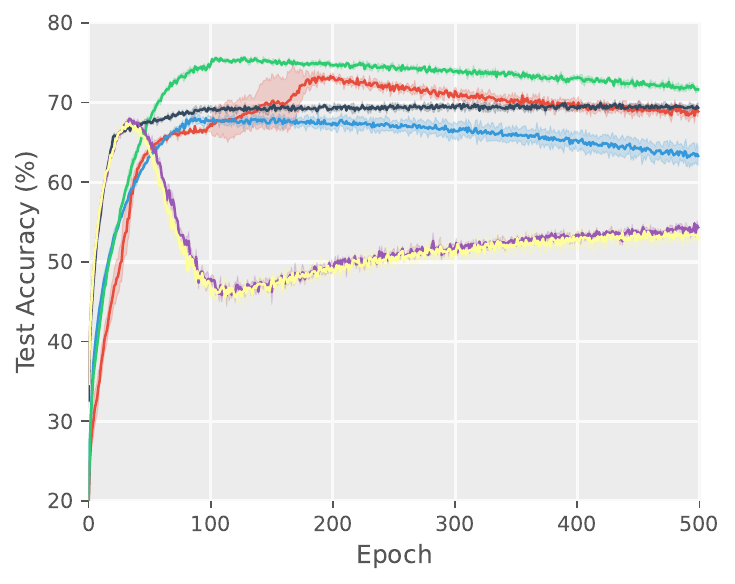}
		\centerline{\quad ConvNet with N-PLs}
	\end{minipage}
	\caption{Test accuracy of our ABS method with bounded versus unbounded losses on benchmarks.}
	\label{fig:testacc}
\end{figure*}

In Theorem~\ref{thm:arbitrary}, the domination relations mean that the weighted sum of $p(y\in s|y)$ is larger than that of $p(i\in s|y)$ for any incorrect label $i\neq y$.
Compared with noise-free PLs, the constraint on the dominance relationship in the condition for robustness to noisy PLs is tightened from the sum of probabilities to the weighted sum of probabilities.
Besides, we find that with the uniform sampling probability, the constraint is fairly loose: the probability of outputting noise-free PLs is greater than fifty percent, requiring only a little more domain knowledge than a completely random labeling system.

As the Confusing process and Destructing process encompass a variety of cases, we first give a sweeping summary by the following theorem, and then discuss each in detail.

\begin{theorem}\label{thm:ambiguousnoise}
	With the Confusing process or the Destructing process, suppose $\R^*=0$ and the domination relations hold: the weighted sum of $p(y\in s|y)$ is always larger than that of $p(i\in s|y), \forall i\neq y$ , then
	
	(1) for any symmetric loss, $\tilde{f}^*=f^*$;
	
	(2) for any bounded loss, $0\leq\tilde{\R}(f^*)-\tilde{\R}^*\leq A(C_2-C_1)$, where $A$ is a constant associated with $p(y\in s|\boldsymbol{x})$.
\end{theorem}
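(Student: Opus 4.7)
The plan is to mirror the proof strategy of Theorems~\ref{thm:sampling}--\ref{thm:arbitrary}, leveraging the pointwise decomposition
$$\R_{\rm PL}(f)=\EE_{p(\boldsymbol{x})}\Bigl[\sum_{i=1}^{k}w_i(\boldsymbol{x})\,\ell(f(\boldsymbol{x}),i)\Bigr],\qquad w_i(\boldsymbol{x}):=\sum_{s:\,i\in s}\frac{p(s|\boldsymbol{x})}{|s|},$$
which follows by swapping the order of summation inside the expectation defining $\R_{\rm PL}$. A direct calculation gives $\sum_{i=1}^k w_i(\boldsymbol{x})=1$ and $w_i(\boldsymbol{x})\ge 0$, so the PL risk behaves like a supervised risk under an effective pseudo-posterior $w(\boldsymbol{x})$.

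First I would compute $w_i(\boldsymbol{x})$ under each generation process by plugging in the appropriate form of $p(s|\boldsymbol{x})$. For the Ambiguous Noise Process, the composition $p(s|\boldsymbol{x})=\sum_{\bar y}p_{\rm gen}(s|\bar y)\,p(\bar y|\boldsymbol{x})$ with $p(\bar y|\boldsymbol{x})=\boldsymbol{T}^{\top}p(y|\boldsymbol{x})$ turns $w_i(\boldsymbol{x})$ into a linear combination of the clean posteriors $p(y|\boldsymbol{x})$ whose coefficients involve entries of $\boldsymbol{T}\boldsymbol{Q}$-type products, i.e.\ exactly the combinatorial factors already analyzed in Theorems~\ref{thm:sampling} and~\ref{thm:flipping}. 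For the Noisy Ambiguity Process, $p(s|y)=(1-\gamma_s)p_{\rm gen}(s|y)+\gamma_s p_{\rm gen}(\overline{s}|y)$ splits $w_i(\boldsymbol{x})$ into a ``kept'' and a ``flipped'' contribution. In both settings the weights admit a decomposition $w_i(\boldsymbol{x})=\alpha(\boldsymbol{x})+\beta_i(\boldsymbol{x})$ where $\alpha(\boldsymbol{x})$ is label-independent and $\beta_i(\boldsymbol{x})$ is pointwise maximized at $i=y^*(\boldsymbol{x})$ precisely when the stated domination hypothesis on the weighted sums of $p(y\in s|y)$ versus $p(i\in s|y)$ holds.

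For symmetric losses, the identity $\sum_i\ell(f(\boldsymbol{x}),i)=C$ absorbs the $\alpha(\boldsymbol{x})$ part into a constant, leaving $\R_{\rm PL}(f)=\text{const}+\EE[\sum_i\beta_i(\boldsymbol{x})\ell(f(\boldsymbol{x}),i)]$. The hypothesis $\R^*=0$ forces $f^*(\boldsymbol{x})=\boldsymbol{e}^{y^*(\boldsymbol{x})}$ a.s., so $\ell(f^*,y^*)=0$ and $\ell(f^*,i)=C/(k-1)$ for $i\ne y^*$; since each $\boldsymbol{e}^j$-valued choice yields $\sum_i\beta_i\ell(\boldsymbol{e}^j,i)=(C/(k-1))(\sum_i\beta_i-\beta_j)$, the residual is minimized at the largest $\beta_j$, which is $j=y^*(\boldsymbol{x})$ by domination, yielding $f^*_{\rm PL}=f^*$. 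For bounded losses the only change is that $\sum_i\ell(f,i)$ fluctuates in $[C_1,C_2]$ rather than being fixed; tracking the pointwise gap between $\sum_i w_i(\boldsymbol{x})\ell(f^*(\boldsymbol{x}),i)$ and $\inf_f\sum_i w_i(\boldsymbol{x})\ell(f(\boldsymbol{x}),i)$ through the extreme values of $\sum_i\ell$ bounds it by $w_{y^*(\boldsymbol{x})}(\boldsymbol{x})(C_2-C_1)$, whose expectation is the promised constant $A$, manifestly associated with $p(y\in s|\boldsymbol{x})$ through the $|s|^{-1}$-weighted aggregation.

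The main obstacle will be expressing the domination relations in closed form in terms of $(\boldsymbol{T},\eta_{ys}\text{ or }\eta_{yi},\gamma_s)$: compounding noise and ambiguity couples the noise transition matrix to the PL-generation parameters and to $p(y|\boldsymbol{x})$, so $w_{y^*(\boldsymbol{x})}(\boldsymbol{x})>w_i(\boldsymbol{x})$ is a joint condition rather than a clean threshold. The appendix would therefore either keep the condition abstract, as stated, or unpack it into explicit thresholds on $(\gamma_y,\bar\gamma_{y_i})$ and $\gamma_s$ in the spirit of the $\gamma_{\rm PL}<1/2$ condition of Corollary~\ref{cor:arbitrary}. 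Once domination is secured, both claims follow from the weighted-sum decomposition exactly as in the clean-PL analyses of Theorems~\ref{thm:sampling}--\ref{thm:arbitrary}.
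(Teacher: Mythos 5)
Your proposal is correct and follows essentially the same route as the paper: the appendix likewise expands $\R_{\rm PL}(f)$ as a per-class weighted sum of $\ell(f(\boldsymbol{x}),i)$ with $\tfrac{1}{|s|}$-weighted coefficients (denoted $A$ and $h(i)$, built from $d(\cdot)$ and $d(\cdot,\cdot)$ terms that compose the noise transition matrix or the set-flipping rate with the sampling/flipping probabilities), substitutes $\sum_i\ell=C$ (resp.\ $C_1\le\sum_i\ell\le C_2$), and closes the argument via $\R^*=0$ and the domination $A>h(i)$, then specializes to uniform parameters to obtain the explicit thresholds such as $\gamma<\tfrac{k-1}{k}$ and $\gamma_{\rm PL}<\tfrac12$. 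Your anticipated ``main obstacle'' of unpacking the domination condition in closed form is exactly what the paper's four case-by-case computations do.
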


First we investigate the Confusing process.
We simplify the process by assuming the uniform cases in the generation of candidate labels.
This simplification degenerates the domination relations to $\bar{\gamma}^y_i<1-\gamma^y,\ \forall i\neq y$.
We were a little surprised to find that it becomes identical to the condition for asymmetric-noise-tolerance (Theorem 3 in \cite{ghosh2017robust}). 
Further supposing that the true labels are corrupted uniformly, we have $\gamma<(k-1)/k$.
This is the same as the noise-tolerant condition under uniform noise (Theorem 1 in \cite{ghosh2017robust}).
Moreover, the constraint $\R^*=0$ is removed and $\R(\tilde{f}^*)-\R^*$ is bounded in this case.
These facts demonstrate that, as long as the candidate labels are generated uniformly, the robustness condition to noisy PLs is \emph{exclusively determined} by the label noise rate.

Next we probe into the Destructing process through similar renderings. 
When the candidate labels are generated in a uniform manner, the domination relations are reduced to $\gamma_{s}<1/k,\ \forall s$.
If the probability of every candidate-label set being destructed is also uniform, the set flipping rate is also the unreliability rate, namely $\gamma_{s}=\gamma,\ \forall s$. 
Then this process is essentially the same as the Uniform global process.
We again show that in the case of a uniform generation process for candidate labels, the PL-robustness condition is only related to the extent to which the candidate-label set is unreliable.

\vspace{1em}  
\noindent
\textbf{Remarks}\quad 
Through the theorems above, we can summarize some of their commonalities:
\begin{itemize}[topsep=0ex,itemsep=0ex,leftmargin=*]
	\item The critical condition that makes the APL losses PL-robust is consistent in all generation scenarios: the weighted sum of the probabilities that the true label is associated with the instance dominates;
	\item For noisy PL data, if the candidate labels are generated uniformly, the robustness condition is completely determined by the degree of unreliability rate;
	\item For bounded losses, the tighter the bound is, the stronger PL-robustness the APL losses have, and the most desirable situation (classifier-consistency) is achieved by the symmetric loss.
\end{itemize}

The above theoretical findings provide guidance to the design of losses of ABS.
Note that IBS is heuristic and not really ERM-based (refer to Section.~\ref{sec:enhance}), and on this account, the robustness of IBS could hardly be proven.

\begin{figure*}[!t]
	\begin{minipage}{.04\textwidth}
		\centering
		\begin{tabular}{l}
			MNIST
		\end{tabular}
	\end{minipage}
	\hfill
	\begin{minipage}{.26\textwidth}
		\centering
		\includegraphics[width=4cm]{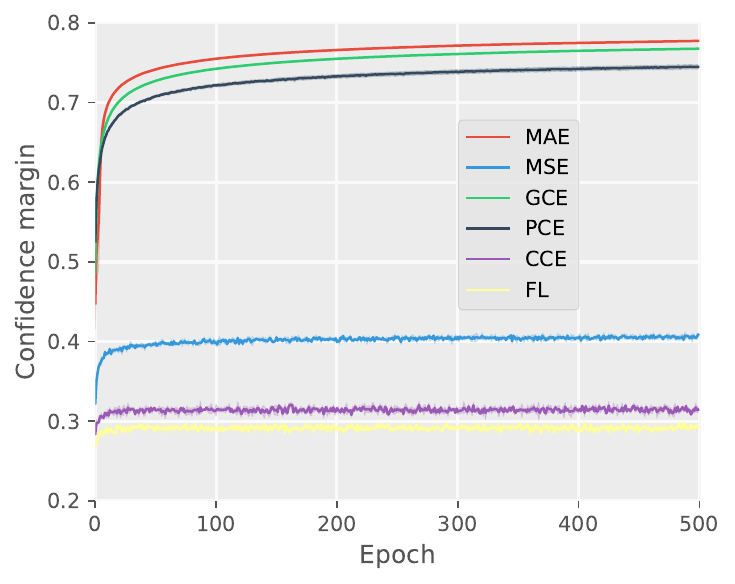}
		\centerline{\quad Linear with NF-PLs}
	\end{minipage}\hspace{-8mm} 
	\begin{minipage}{.26\textwidth}
		\centering
		\includegraphics[width=4cm]{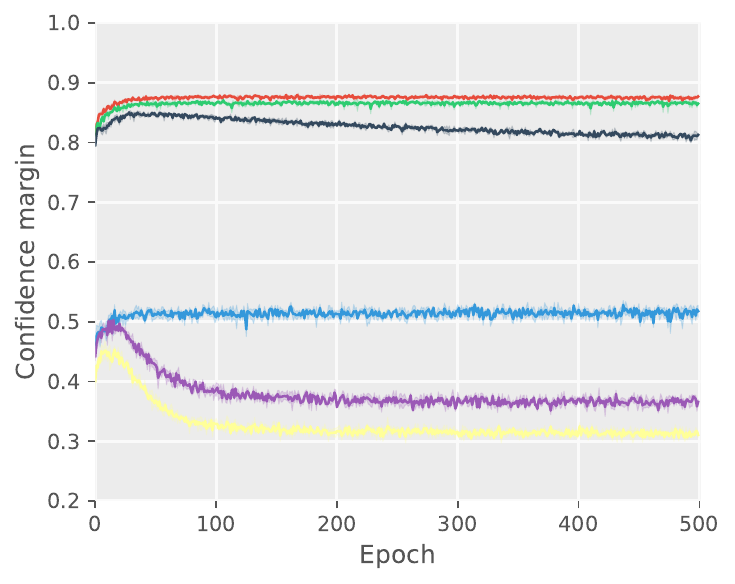}
		\centerline{\quad MLP with NF-PLs}
	\end{minipage}\hspace{-8mm}
	\begin{minipage}{.26\textwidth}
		\centering
		\includegraphics[width=4cm]{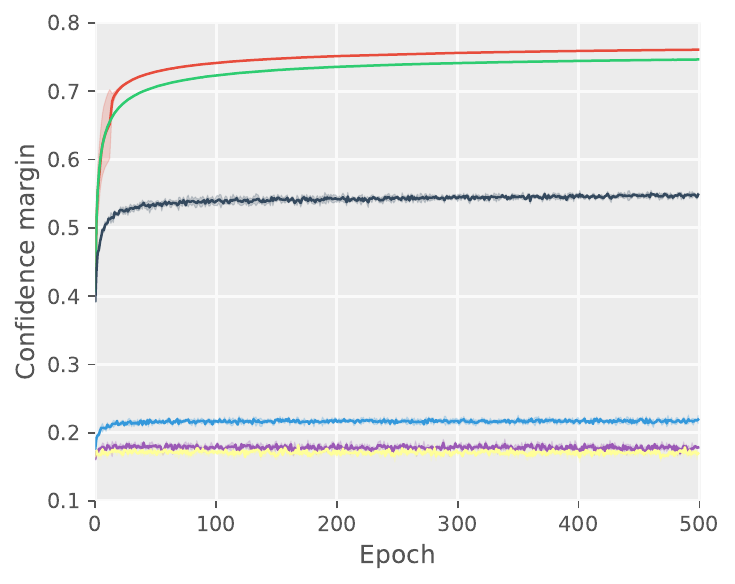}
		\centerline{\quad Linear with N-PLs}
	\end{minipage}\hspace{-8mm}
	\begin{minipage}{.26\textwidth}
		\centering
		\includegraphics[width=4cm]{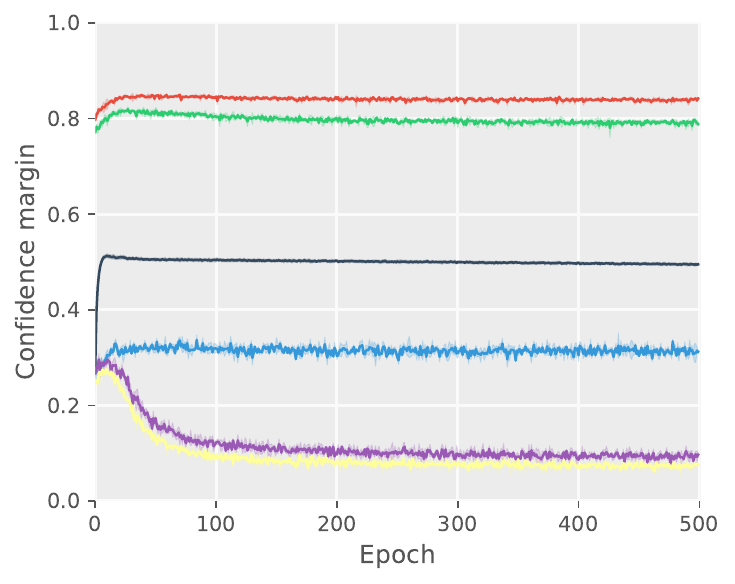}
		\centerline{\quad MLP with N-PLs}
	\end{minipage}
	
	\begin{minipage}{.04\textwidth}
		\centering
		\begin{tabular}{l}
			CIFAR-\\10
		\end{tabular}
	\end{minipage}
	\hfill
	\begin{minipage}{.26\textwidth}
		\centering
		\includegraphics[width=4cm]{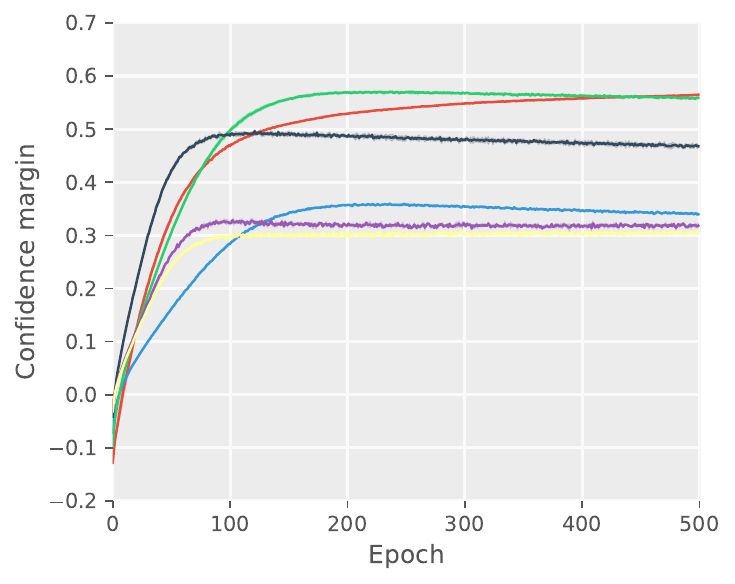}
		\centerline{\quad MLP with NF-PLs}
	\end{minipage}\hspace{-9mm} 
	\begin{minipage}{.26\textwidth}
		\centering
		\includegraphics[width=4cm]{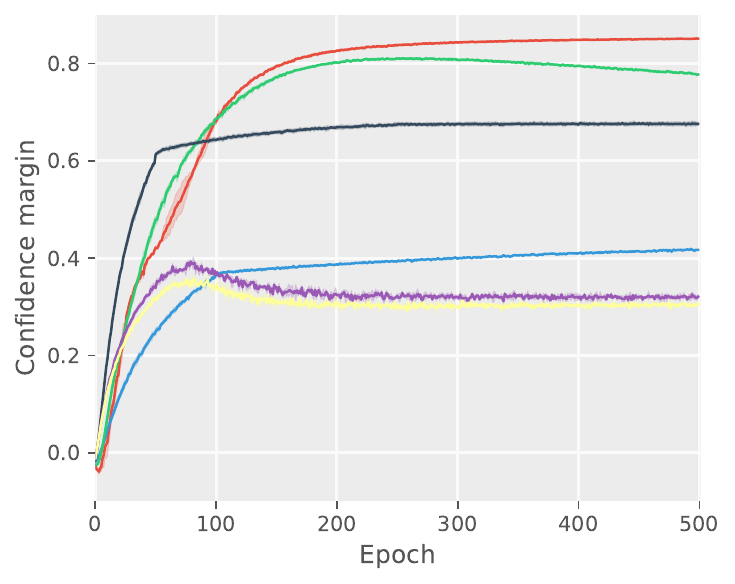}
		\centerline{\quad ConvNet with NF-PLs}
	\end{minipage}\hspace{-9mm} 
	\begin{minipage}{.26\textwidth}
		\centering
		\includegraphics[width=4cm]{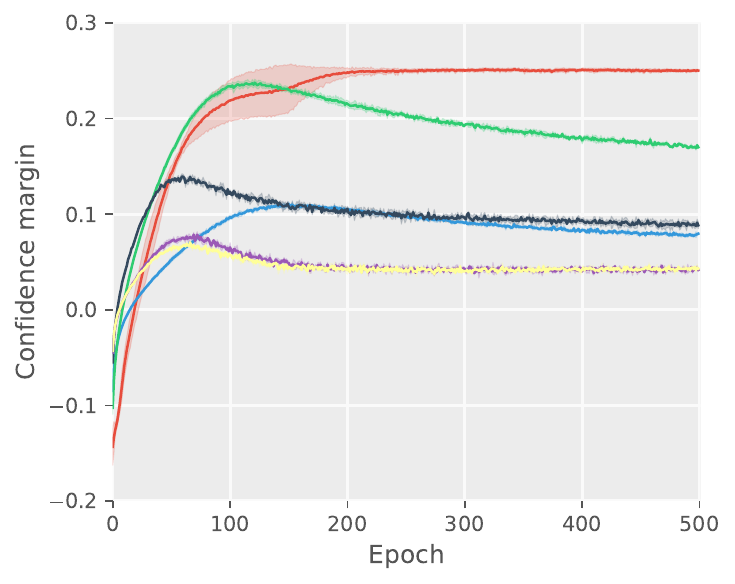}
		\centerline{\quad MLP with N-PLs}
	\end{minipage}\hspace{-9mm} 
	\begin{minipage}{.26\textwidth}
		\centering
		\includegraphics[width=4cm]{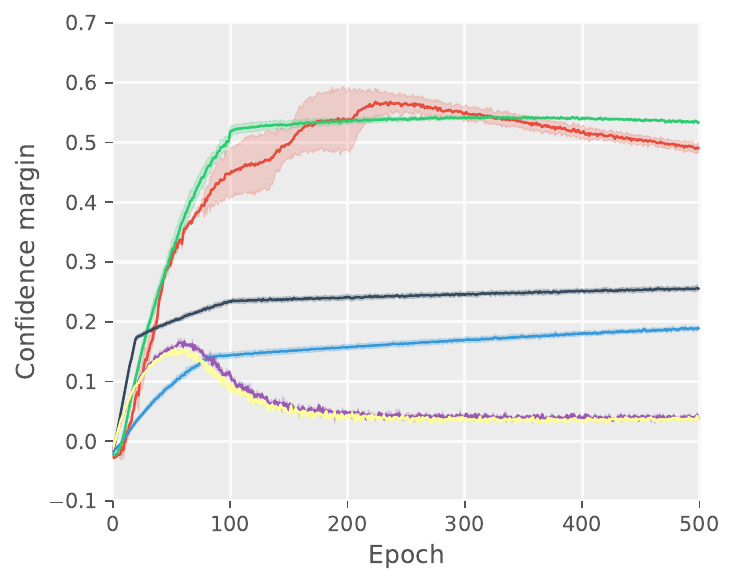}
		\centerline{\quad ConvNet with N-PLs}	
	\end{minipage}
	\caption{Confidence margin of our ABS method with bounded versus unbounded losses on benchmarks.}
	\label{fig:margin}
\end{figure*}

\begin{table*}[!t]
	\centering
	\caption{Means$\pm$standard deviations of test accuracy in percentage with different data generations.}
	\label{tab:benchmark}
	\renewcommand\arraystretch{1.25}
	\setlength{\tabcolsep}{0.6mm}{
		\begin{tabular}{l|lc|cccc|cc}
			\toprule
			\multirow{2}{*}{Dataset} & \multirow{2}{*}{Model} & \multirow{2}{*}{Case} & \multicolumn{4}{c|}{Bounded} & \multicolumn{2}{c}{Unbounded} \\
			&  &  & MAE & MSE & GCE & PCE & CCE & FL \\\midrule
			\multirow{8}{*}{MNIST}           & \multirow{4}{*}{Linear} & 1 & $\bm{91.17\pm0.07}$ & 87.61$\pm$0.07 & 90.40$\pm$0.12  & 86.28$\pm$0.40 & 85.13$\pm$0.29 & 85.10$\pm$0.24  \\
			&         & 2 & 92.25$\pm$0.08 & 92.03$\pm$0.10 & 92.33$\pm$0.06 & $\bm{92.71\pm0.08}$ & 89.51$\pm$0.17 & 89.71$\pm$0.13 \\
			&         & 3 & $\bm{91.90\pm0.13}$ & 89.90$\pm$0.10 & 91.86$\pm$0.10 & 91.25$\pm$0.19 & 87.28$\pm$0.27 & 87.08$\pm$0.27 \\
			&         & 4 & $\bm{90.37\pm0.19}$ & 82.70$\pm$0.49 & 85.87$\pm$0.48 & 80.61$\pm$0.41 & 67.78$\pm$1.19 & 67.63$\pm$0.84 \\\cline{2-9}
			& \multirow{4}{*}{MLP-5}   & 1 & $\bm{94.44\pm0.19}$ & 82.28$\pm$0.83 & 92.77$\pm$0.21 & 87.62$\pm$0.27 & 84.80$\pm$0.50 & 84.15$\pm$0.73 \\
			&         & 2 & $\bm{96.71\pm0.05}$ & 95.16$\pm$0.19 & 96.49$\pm$0.15 & 95.92$\pm$0.13 & 93.08$\pm$0.30 & 92.62$\pm$0.18 \\
			&         & 3 & $\bm{95.87\pm0.85}$ & 80.71$\pm$0.52 & 95.34$\pm$0.19 & 91.14$\pm$0.25 & 77.68$\pm$0.53 & 75.66$\pm$0.74 \\
			&         & 4 & $\bm{93.18\pm0.47}$ & 89.12$\pm$1.03 & 89.19$\pm$0.47 & 86.52$\pm$0.82 & 77.57$\pm$1.32 & 76.45$\pm$0.91 \\\midrule
			\multirow{8}{*}{\begin{tabular}[c]{@{}l@{}}Fashion-\\ MNIST\end{tabular}}   & \multirow{4}{*}{Linear}   & 1 & 81.50$\pm$1.82 & 80.23$\pm$0.29 & $\bm{82.50\pm0.21}$ & 79.85$\pm$0.38 & 77.92$\pm$0.49 & 77.56$\pm$0.58 \\
			&         & 2 & 81.08$\pm$0.02 & 83.59$\pm$0.24 & $\bm{84.72\pm0.12}$ & 84.35$\pm$0.12 & 81.88$\pm$0.36 & 81.91$\pm$0.09 \\
			&         & 3 & 83.55$\pm$2.42 & 81.12$\pm$0.30 & $\bm{84.25\pm0.09}$ & 81.60$\pm$0.28 & 79.36$\pm$0.24 & 79.37$\pm$0.31 \\
			&         & 4 & 76.40$\pm$0.50 & 73.87$\pm$0.37 & $\bm{78.52\pm0.47}$ & 61.50$\pm$0.67 & 43.83$\pm$0.38 & 40.39$\pm$0.93 \\\cline{2-9}
			& \multirow{4}{*}{MLP}   & 1 & $\bm{86.37\pm0.64}$ & 81.26$\pm$0.47 & 84.29$\pm$0.10 & 76.01$\pm$0.49 & 51.29$\pm$0.83 & 49.36$\pm$0.48 \\
			&         & 2 & $\bm{87.52\pm0.09}$ & 84.49$\pm$0.27 & 87.32$\pm$0.31 & 85.47$\pm$0.05 & 74.10$\pm$0.64  & 73.50$\pm$0.24 \\
			&         & 3 & $\bm{85.74\pm0.27}$ & 81.62$\pm$0.33 & 84.42$\pm$0.37 & 80.33$\pm$0.51 & 53.09$\pm$0.74 & 52.75$\pm$0.81 \\
			&         & 4 & $\bm{82.41\pm0.25}$ & 73.49$\pm$0.53 & 66.83$\pm$0.87 & 65.73$\pm$0.87 & 25.76$\pm$0.38 & 22.01$\pm$0.53 \\\midrule
			\multirow{8}{*}{\begin{tabular}[c]{@{}l@{}}Kuzushiji-\\ MNIST\end{tabular}}   & \multirow{4}{*}{Linear}& 1 & 63.14$\pm$0.14 & 59.65$\pm$0.62 & $\bm{65.58\pm0.32}$ & 58.78$\pm$0.46 & 54.90$\pm$0.61 & 55.02$\pm$0.48 \\
			&         & 2 & 64.35$\pm$0.09 & $\bm{68.54\pm0.31}$ & 65.04$\pm$0.20 & 67.40$\pm$1.93 & 63.79$\pm$0.34 & 63.74$\pm$0.44 \\
			&         & 3 & 63.55$\pm$0.22 & 65.26$\pm$0.21 & 64.24$\pm$0.29 & $\bm{67.75\pm0.17}$ & 59.82$\pm$0.36 & 60.16$\pm$0.73 \\
			&         & 4 & $\bm{60.34\pm0.67}$ & 56.16$\pm$0.80 & 51.98$\pm$0.44 & 51.47$\pm$0.80 & 36.48$\pm$0.59 & 35.76$\pm$0.31 \\\cline{2-9}
			& \multirow{4}{*}{MLP}    & 1 & $\bm{81.72\pm0.50}$ & 64.60$\pm$1.06  & 75.71$\pm$0.15 & 57.17$\pm$0.22 & 36.07$\pm$0.16 & 26.13$\pm$0.46 \\
			&         & 2 & $\bm{86.99\pm0.25}$ & 74.24$\pm$0.13 & 86.59$\pm$0.24 & 80.90$\pm$0.41 & 66.06$\pm$0.51 & 64.81$\pm$0.41 \\
			&         & 3 & 78.86$\pm$2.25 & 69.90$\pm$0.61 & $\bm{79.26\pm0.54}$ & 72.89$\pm$0.18 & 56.43$\pm$0.88 & 54.96$\pm$0.64 \\
			&         & 4 & $\bm{66.61\pm0.29}$ & 54.93$\pm$0.18 & 43.06$\pm$0.44 & 42.58$\pm$0.43 & 23.93$\pm$0.44 & 23.35$\pm$0.63 \\\midrule
			\multirow{8}{*}{\begin{tabular}[c]{@{}l@{}}CIFAR-\\ 10\end{tabular}}   & \multirow{4}{*}{MLP}& 1 & $\bm{41.46\pm0.72}$ & 33.11$\pm$0.69 & 31.98$\pm$0.40 & 31.10$\pm$0.42 & 22.47$\pm$0.49 & 22.75$\pm$0.28 \\
			&         & 2 & $\bm{48.69\pm0.25}$ & 38.99$\pm$0.42 & 46.00$\pm$0.30 & 42.94$\pm$0.44 & 36.81$\pm$0.22 & 36.37$\pm$0.35 \\
			&         & 3 & $\bm{40.83\pm0.48}$ & 29.05$\pm$0.49 & 35.30$\pm$0.30 & 30.90$\pm$0.49 & 28.04$\pm$0.34 & 27.31$\pm$0.18 \\
			&         & 4 & $\bm{31.62\pm0.60}$ & 25.52$\pm$0.55 & 20.75$\pm$0.35 & 21.10$\pm$0.37 & 13.51$\pm$0.34 & 13.70$\pm$0.35 \\\cline{2-9}
			& \multirow{4}{*}{ConvNet}   & 1 & 66.86$\pm$1.63 & 61.67$\pm$0.31 & $\bm{68.65\pm0.71}$ & 34.93$\pm$0.59 & 33.67$\pm$0.79 & 34.17$\pm$0.89 \\
			&         & 2 & $\bm{86.65\pm0.13}$  & 75.47$\pm$0.11  & 82.74$\pm$0.29 & 79.11$\pm$0.32 & 72.82$\pm$0.61 & 73.83$\pm$0.21 \\
			&         & 3 & 68.88$\pm$0.65 & 63.30$\pm$1.16  & $\bm{71.65\pm0.27}$ & 69.35$\pm$0.65 & 53.13$\pm$0.63 & 52.56$\pm$1.14 \\
			&         & 4 & $\bm{51.96\pm4.21}$ & 49.58$\pm$0.62 & 43.83$\pm$0.36 & 43.51$\pm$0.61 & 17.29$\pm$0.30 & 17.02$\pm$0.14 \\    \bottomrule       
	\end{tabular}}
\end{table*}

\subsection{Estimation Error Bound}

Let us review the relation between robustness and consistency again.
Now we have proved the condition that bounds RHS2 in Equation~(\ref{eq:estimation}) (assuming $\F$ is instantiated to be a DNN).
Then we establish the estimation error bound and show that as the number of training data approaches infinity, RHS1 is also bounded.

Suppose $\G_y$ be a class of real functions, and $\F=\oplus_{y\in[k]}\G_y$ be a $k$-valued function class.
Assume there are $C_{f}>0$ and $C_{\ell}>0$ such that $\sup_{f\in\F}||f||_\infty\leq C_{f}$ and  $\sup_{x\in\X,f\in\F,y\in\Y}\ell(f(x),y)\leq C_{\ell}$, and assume $\ell(f(x),y)$ is $\rho$-Lipschitz continuous for all $||f||_\infty\leq C_{f}$.
The \emph{Rademacher complexity} of $\G_y$ over $p(x)$ with sample size $n$ is defined as $\Ra_n(\G_y)$ \cite{bartlett2002rademacher,mohri2018foundations}. 
Then we have the following estimation error bound.
\begin{theorem}\label{thm:errorbound}
	For any $\delta>0$, we have with probability at least $1-\delta$,
	\begin{align}\label{eq:errorbound}
		\tilde{\R}(\hat{\tilde{f}})- \tilde{\R}(\tilde{f}^{\bigstar}) \leq 4\sqrt{2}k\rho\sum_{y=1}^k \Ra_n(\G_y) + 2C_{\ell}\sqrt{\frac{\log (2/\delta)}{2n}}.
	\end{align}
\end{theorem}
As $n\rightarrow\infty$, $\Ra_n(\G_y)\rightarrow 0$ for all parametric models with a bounded norm such as DNNs trained with weight decay \cite{lu2018minimal}, which signifies $\tilde{\R}(\hat{\tilde{f}})\rightarrow\tilde{\R}(\tilde{f}^{\bigstar})$.

\section{Experimental Findings}\label{sec:exp}

In this section, we provide some empirical understandings of our ABS method, experimentally validating our theoretical findings on benchmark datasets, which then enlighten an improvement of IBS methods.
The implementation is based on PyTorch \cite{paszke2019pytorch} and experiments were carried out with NVIDIA Tesla V100 GPU.

\subsection{Empirical Understanding of APL Losses}\label{sec:empirical}
\textbf{Our ABS method with bounded loss functions are robust.}\quad 
We first run a set of experiments on MNIST \cite{lecun1998gradient} and CIFAR-10 \cite{krizhevsky2009learning} to verify whether our ABS method with bounded losses is robust to both noise-free PLs and noisy PLs, whereas they are not with unbounded losses.
We generate noise-free PLs by the Uniform flipping process with flipping probability 0.1, and noisy PLs by the Confusing process where the label noise is uniform and $\gamma=0.3$. 
Then the candidate labels are generated in the same way as the previous noise-free PLs.
On each dataset, we train two networks using the APL losses with different multi-class losses, e.g., bounded versus unbounded ones in Table~\ref{tab:loss}.
The RCE loss is, up to a constant of proportionality, equivalent to the MAE loss and omitted.
We set the focusing parameter 0.5 for the FL loss.
Detailed settings are in Section~\ref{sec:evaluation}.

The test accuracy with different losses is presented in Figure~\ref{fig:testacc}.
As we have theoretically proved, learning with bounded losses is robust: after reaching a peak in test accuracy, their test accuracy is relatively flat throughout the training process.
However, unbounded losses exhibit significant overfitting in most cases.
Specifically, the symmetric loss MAE has the smoothest curve, while other bounded losses would be slightly overfitting in difficult learning scenarios.
The same results are shown across different datasets, under different data settings, with different models.
In general, the more difficult the learning scenario is (e.g., harder datasets and weaker supervised information), the larger the gap between bounded loss and unbounded loss is, because unbounded losses overfit more severely.

\begin{table*}[!t]
	\centering
	\caption{PRODEN \& Our ABS method on CIFAR datasets. E and A stand for early stopping and robust warm start, respectively. The best and equivalent based on the paired $t$-test at the significance level 5\% are shown in boldface by comparing the 1st and 3rd columns, the 2nd and 4th columns. ``---'' means that we skipped the experiments under the reliable PLL setting. The best combination is \underline{underline}.}
	\label{tab:abs+ibs}
	\renewcommand\arraystretch{1.25}
	\setlength{\tabcolsep}{1.6mm}{
		\begin{tabular}{l|lrr|cc|cc|c}
			\toprule
			\multirow{2}{*}{Dataset} & \multirow{2}{*}{Model} & \multirow{2}{*}{Case} & \multicolumn{1}{l|}{E:} & \text{\sffamily X} & \checkmark & \text{\sffamily X} & \checkmark  & Our ABS \\
			&   &   & \multicolumn{1}{l|}{A:} & \text{\sffamily X} & \text{\sffamily X} & \checkmark & \checkmark & w/ E \\\midrule
			\multirow{8}{*}{CIFAR-10} 
			& \multirow{4}{*}{MLP} & 1                     &  & 48.28$\pm$0.62 & 48.33$\pm$0.54 &  \underline{$\bm{48.77\pm0.26}$} & $\bm{48.54\pm0.36}$  & --- \\
			&  & 2 & & 51.49$\pm$0.30 & 52.17$\pm$0.21 & $\bm{52.31\pm0.32}$ & \underline{$\bm{52.70\pm0.14}$} & --- \\
			& & 3 & & 38.52$\pm$0.16 & 44.27$\pm$0.13 & $\bm{39.00\pm0.38}$ & \underline{$\bm{46.67\pm0.30}$} & 45.86$\pm$0.14 \\
			& & 4 & & 29.19$\pm$0.63 & 34.04$\pm$0.56 & $\bm{29.30\pm0.43}$ & \underline{$\bm{35.12\pm0.42}$} & 34.54$\pm$0.36 \\\cline{2-9}
			& \multirow{4}{*}{ConvNet} & 1 &  & 85.38$\pm$0.11 & 85.52$\pm$0.25 & $\bm{85.93\pm0.31}$ & \underline{$\bm{86.09\pm0.20}$} & --- \\
			& & 2 & & 88.62$\pm$0.19 & 88.29$\pm$0.33 &  \underline{$\bm{89.42\pm0.17}$} & $\bm{89.05\pm0.29}$  & --- \\
			& & 3 & & 64.59$\pm$0.70 & 73.05$\pm$0.31 & $\bm{66.86\pm0.72}$ & $\bm{75.99\pm0.39}$ & \underline{76.72$\pm$0.12} \\
			& & 4 & & 47.35$\pm$0.47 & 53.47$\pm$0.38 & $\bm{50.77\pm0.82}$ & \underline{$\bm{56.68\pm0.49}$} & 55.50$\pm$0.24 \\\midrule
			\multirow{3}{*}{CIFAR-100} & \multirow{3}{*}{ConvNet} & 2 &  & 54.49$\pm$0.46 & 59.90$\pm$0.53 & $\bm{58.31\pm0.32}$ &  \underline{$\bm{60.60\pm0.22}$} & --- \\
			& & 3 & & 34.78$\pm$0.31 & 42.04$\pm$0.22 & $\bm{36.27\pm0.43}$ & $\bm{43.67\pm0.22}$ & \underline{44.75$\pm$0.15} \\
			& & 4 & & 37.37$\pm$0.32 & 47.29$\pm$0.31 & $\bm{39.55\pm0.24}$ & \underline{$\bm{47.86\pm0.20}$} & 46.13$\pm$0.24 \\\bottomrule  
	\end{tabular}}
\end{table*}

\vspace{1em}  
\noindent
\textbf{How do the models fit the candidate labels when training with the APL losses?}\quad 
As we discussed before, ABS methods are free from identifying the true labels during training.
One may wonder how ABS methods learn from PL data.
This raises the question: is the learned model able to identify the true label of the training samples?
We investigate this problem by looking at the \emph{confidence margin} between the model's output on the true label and the maximum output on the other labels, i.e., $\mathrm{confidence-margin}(x_i)=g_{y_i}(x_i)-\max_{j\neq y_i}g_{j}(x_i)$.
The larger the margin is, the greater the likelihood that the model will successfully identify the true label for the training samples is. 
In Figure~\ref{fig:margin}, we illustrate the mean confidence margin over the training set.
We find that the margins trained with bounded losses are generally much higher than those trained with unbounded losses.
This means that although our ABS method does not explicitly disambiguate the candidate-label sets during the training phase, our ABS method with bounded losses is still able to robustly fit the true labels against the interference of other candidates, thereby explaining the good prediction performance in the test set.

\subsection{Evaluation on Benchmark Datasets}\label{sec:evaluation}
\noindent
\textbf{Setup}\quad
Experiments were conducted on four widely-used benchmark datasets including MNIST, Fashion-MNIST \cite{xiao2017fashion}, Kuzushiji-MNIST \cite{clanuwat2018deep}, and CIFAR-10. 
On each dataset, we generated PLs by 
(Case 1) the Uniform filtered sampling process; 
(Case 2) the Uniform flipping process with $\eta=0.1$;
(Case 3) the Confusing process where the label noise rate equals $0.3$ and the candidate labels were generated according to Case 2;
(Case 4) the Destructing process where the candidate labels were generated according to Case 1 and the set flipping rate equals $0.3$. 
We split the original training dataset into training and validation data with a proportion of $9:1$, and added candidate labels to both of them.

We employed various base models including a linear-in-input model (Linear), a 5-layer perceptron (MLP), and a 12-layer convolutional neural network (ConvNet) \cite{huang2017densely}. 
Linear was trained on MNIST-like datasets, ConvNet was trained on CIFAR-10, and MLP was trained on all datasets. 
The optimizer was stochastic gradient descent with momentum 0.9. 
We trained each model 500 epochs with the mini-batch size set to 256, and recorded the test accuracy of the hyper-parameters (learning rate and weight decay) with the best validation accuracy.
We did not use any manual learning rate decay and early stopping.

\vspace{1em}  
\noindent
\textbf{Results}\quad
Tables~\ref{tab:benchmark} shows the test accuracy over 5 trials. 
The best and comparable methods based on the paired $t$-test at the significance level 5\% were highlighted in boldface. 
We can see that the bounded loss always outperforms the unbounded loss, especially on the complex models.
In difficult scenarios, i.e., unreliable PLL, the accuracy of the complex models trained with bounded losses is almost always better than their linear counterpart, but unbounded losses can make the complex models overfit very badly on some tasks, causing their performance to become worse.

\subsection{Enhancing IBS Methods with ABS}\label{sec:enhance}
We revisit the SOTA IBS methods PRODEN \cite{lv2020progressive}, RC \cite{feng2020provably}, and LW \cite{wen2021leveraged}.
Their typical learning objective is as follows:
\begin{equation*}
	\hat{\R}(f;\ell)=\frac{1}{n}\sum_{i=1}^n\big[\sum_{j\in s_i}w^i_j\ell(g_j(\boldsymbol{x}_i),j)+\sum_{j\notin s_i}w^i_j\ell(g_j(\boldsymbol{x}_i),j)\big],
\end{equation*}
where $w_j$ is a weight for $j\in[k]$.
The weights of the labels that are more likely to be true are progressively increased.
Generally speaking, they initialize the weights to be
\begin{equation*}
	w^i_j=1/|s_i|,\ \forall j\in s_i.
\end{equation*}
They train a learning model $f$ with the uniform weights for several epochs for a warm start, and then update $w$ and $f$ seamlessly for the remaining epochs.
We highlight that uniform weights are necessary to break the circular dependency existing between $w$ and $f$: $f$ needs to be trained with reasonable $w$ and $w$ needs to be estimated by well-trained $f$.
The success of the algorithms is built on the observation that even if each sample has multiple candidate labels, $f$ will remember the true one first \cite{arpit2017closer}.
Thus they adjust $w$ by the output of $f$.
It indicates that an IBS method has to be pretrained a little in an ABS manner.

While we note that they always use the CCE loss in both the ABS-style phase and the subsequent IBS-style phase, which could potentially select incorrect true labels at the very beginning and negatively affect the model training.
Therefore, we introduce an enhanced principle to incorporate our theoretical findings into existing IBS methods: training the learning model with \emph{a robust warm start} to avoid overfitting.
We replace their loss functions of the first 20 epochs with the APL loss with MAE, and then switch back to their original objective function.
The hyper-parameters are tuned according to the original methods.

We considered Case 1, 2, 3, and 4 for CIFAR-10, and Case 2, 3, and 4 where the candidate labels were generated by the Uniform flipping process and $\eta=0.01$ for CIFAR-100\cite{krizhevsky2009learning}.
We used the same training/validation setting, models, and optimizer as in Section~\ref{sec:evaluation}.
We summarize the results without/with early stopping of PRODEN in Table~\ref{tab:abs+ibs}, which means that we report the last epoch or the epoch in which the best validation accuracy was reached during training.
The results of RC and LW are put in Appendix B.

From Table~\ref{tab:abs+ibs}, we can see that the enhanced method with the robust warm start has significant performance improvements over its original version.
The model has better performance when early stopping is not deployed, suggesting that the robust warm start helps not to remember incorrect labels at an early stage. 
Even after using early stopping, our enhanced version also allows for further performance improvements.
Moreover, we presented the results of our ABS method with early stopping in the last column.
It is usually comparable with the highest accuracy, meaning that our proposal is a simple yet effective baseline for unreliable PLL. 
The results on CIFAR-10 are also shown in Figure~\ref{fig:ibsVSabs}.

\section{Conclusion}\label{sec:conclu}
In this paper, we rethought the forgotten ABS in the era of deep PLL, and improved it theoretically and practically.
Theoretically, we proposed five data generation processes of noise-free and noisy PLs, and analyzed the conditions that ABS is robust to PLs, which filled the theoretical gap in the robustness analysis of PLL.
Practically, we conducted extensive experiments to confirm our theoretical findings, and showed that the IBS methods could be improved from our work, which pushed forward PLL as a whole.

\section*{Acknowledgments}
BL and XG was supported by the National Key Research and Development Plan of China (No.2018AAA0100104), and the National Natural Science Foundation of China (62125602, 62076063). 
LF was supported by the National Natural Science Foundation of China (Grant No. 62106028) and CAAI-Huawei MindSpore Open Fund.
NX was supported by China Postdoctoral Science Foundation (2021M700023), Jiangsu Province Science Foundation for Youths (BK20210220). 
GN and MS were supported by JST AIP Acceleration Research Grant Number JPMJCR20U3, Japan. MS was also supported by the Institute for AI and Beyond, UTokyo. 

\bibliographystyle{ieeetr}
\bibliography{arxiv}

\end{document}